\documentclass{article}
\usepackage{verbatim}
\usepackage{pifont}
\newcommand{\cmark}{\ding{51}} % ✓
\newcommand{\xmark}{\ding{55}} % ✗

\PassOptionsToPackage{authoryear, round}{natbib}
\usepackage[preprint]{neurips_2026}

\usepackage[utf8]{inputenc} % allow utf-8 input
\usepackage[T1]{fontenc}    % use 8-bit T1 fonts
\usepackage{hyperref}       % hyperlinks
\usepackage{url}            % simple URL typesetting
\usepackage{booktabs}       % professional-quality tables
\usepackage{amsfonts}       % blackboard math symbols
\usepackage{nicefrac}       % compact symbols for 1/2, etc.
\usepackage{microtype}      % microtypography
\usepackage{xcolor}         % colors
\usepackage{amsmath}
\usepackage{graphicx}
\usepackage{comment}
\usepackage{subcaption} % in the preamble
\usepackage{booktabs}
\usepackage{wrapfig}
\usepackage{cleveref}
\usepackage{amsthm}
\usepackage{algorithm}
\usepackage{algpseudocode}
\usepackage{float}  
\usepackage[table]{xcolor}
\usepackage[most]{tcolorbox}

\newtheorem{theorem}{Theorem}
\newtheorem{proposition}{Proposition}

\newcommand{\efflayer}{effective layer\xspace}
\newcommand{\efflayers}{effective layers\xspace}

\definecolor{brightgreen}{HTML}{00FF00}
\definecolor{textgray}{gray}{0.5}
\definecolor{fprmblue}{HTML}{648FFF}
\definecolor{cornflowerblue}{rgb}{0.34, 0.51, 0.82}
\colorlet{fprmbluelight}{fprmblue!20}

\definecolor{light}{RGB}{240, 240, 240} % RGB values 

\newtcolorbox{lightgraybox}{
    enhanced,
    frame hidden,
    sharp corners,
    colback=fprmbluelight,
    top=2pt,
    bottom=2pt,
}

\newtcolorbox{lightgrayframe}{
    enhanced,
    sharp corners,
    colback=white,      % no grey fill (white = transparent on a white page)
    colframe=fprmbluelight,     % frame color
    boxrule=3pt,      % frame thickness
    top=2pt,
    bottom=2pt,
}

\hypersetup{
    colorlinks=true,
    linkcolor=cornflowerblue,   % internal links (sections, figures, eqs)
    filecolor=magenta,
    urlcolor=teal,              % external URLs
    citecolor=cornflowerblue,   % \cite links
}

\title{Fixed-Point Reasoners:\\ Stable and Adaptive Deep Looped Transformers}

\author{%
  \textbf{Sajad Movahedi}\thanks{Equal contribution.}~~\textsuperscript{1} \quad
  \textbf{Vera Milovanović}\footnotemark[1]~~\textsuperscript{1,2} \quad
  \textbf{Shlomo Libo Feigin}\footnotemark[1]~~\textsuperscript{1,2} \quad
  \textbf{Alexander Theus}\footnotemark[1]~~\textsuperscript{1, 2} \\[0.4em]
  \textbf{Thomas Hofmann}\textbf{\textsuperscript{2}} \quad
  \textbf{Valentina Boeva}\thanks{Equal advising.}~~\textbf{\textsuperscript{2, 3, 4}} \quad
  \textbf{T. Konstantin Rusch}\footnotemark[2]~~\textbf{\textsuperscript{1,5}} \quad
  \textbf{Antonio Orvieto}\footnotemark[2]~~\textbf{\textsuperscript{1}} \\[0.6em]
  \textsuperscript{1}ELLIS Institute Tübingen, Max Planck Institute for Intelligent Systems, Tübingen AI Center \\
  \textsuperscript{2}ETH Zurich \quad
  \textsuperscript{3}Swiss Institute of Bioinformatics \quad
  \textsuperscript{4}Université Paris Cité \quad
  \textsuperscript{5}Liquid AI \\[0.4em]
  \texttt{\{sajad.movahedi, vera.milovanovic\}@tue.ellis.eu}
}

\usepackage{amssymb}
\usepackage{amsfonts}
\usepackage{mathrsfs}
\usepackage{xspace}
\usepackage{bm}
\usepackage{upgreek}

\newcommand{\safemath}[2]{\newcommand{#1}{\ensuremath{#2}\xspace}}

\safemath{\bma}{\mathbf{a}}
\safemath{\bmb}{\mathbf{b}}
\safemath{\bmc}{\mathbf{c}}
\safemath{\bmd}{\mathbf{d}}
\safemath{\bme}{\mathbf{e}}
\safemath{\bmf}{\mathbf{f}}
\safemath{\bmg}{\mathbf{g}}
\safemath{\bmh}{\mathbf{h}}
\safemath{\bmi}{\mathbf{i}}
\safemath{\bmj}{\mathbf{j}}
\safemath{\bmk}{\mathbf{k}}
\safemath{\bml}{\mathbf{l}}
\safemath{\bmm}{\mathbf{m}}
\safemath{\bmn}{\mathbf{n}}
\safemath{\bmo}{\mathbf{o}}
\safemath{\bmp}{\mathbf{p}}
\safemath{\bmq}{\mathbf{q}}
\safemath{\bmr}{\mathbf{r}}
\safemath{\bms}{\mathbf{s}}
\safemath{\bmt}{\mathbf{t}}
\safemath{\bmu}{\mathbf{u}}
\safemath{\bmv}{\mathbf{v}}
\safemath{\bmw}{\mathbf{w}}
\safemath{\bmx}{\mathbf{x}}
\safemath{\bmy}{\mathbf{y}}
\safemath{\bmz}{\mathbf{z}}
\safemath{\bmzero}{\mathbf{0}}
\safemath{\bmone}{\mathbf{1}}

\safemath{\bmbeta}{\mathbf{beta}}

\bmdefine{\biad}{a}
\bmdefine{\bibd}{b}
\bmdefine{\bicd}{c}
\bmdefine{\bidd}{d}
\bmdefine{\bied}{e}
\bmdefine{\bifd}{f}
\bmdefine{\bigd}{g}
\bmdefine{\bihd}{h}
\bmdefine{\biid}{i}
\bmdefine{\bijd}{j}
\bmdefine{\bikd}{k}
\bmdefine{\bild}{l}
\bmdefine{\bimd}{m}
\bmdefine{\bind}{n}
\bmdefine{\biod}{o}
\bmdefine{\bipd}{p}
\bmdefine{\biqd}{q}
\bmdefine{\bird}{r}
\bmdefine{\bisd}{s}
\bmdefine{\bitd}{t}
\bmdefine{\biud}{u}
\bmdefine{\bivd}{v}
\bmdefine{\biwd}{w}
\bmdefine{\bixd}{x}
\bmdefine{\biyd}{y}
\bmdefine{\bizd}{z}

\bmdefine{\bixid}{\xi}
\bmdefine{\bilambdad}{\lambda}
\bmdefine{\bimud}{\mu}
\bmdefine{\bithetad}{\theta}
\bmdefine{\biphid}{\phi}

\safemath{\bmia}{\biad}
\safemath{\bmib}{\bibd}
\safemath{\bmic}{\bicd}
\safemath{\bmid}{\bidd}
\safemath{\bmie}{\bied}
\safemath{\bmif}{\bifd}
\safemath{\bmig}{\bigd}
\safemath{\bmih}{\bihd}
\safemath{\bmii}{\biid}
\safemath{\bmij}{\bijd}
\safemath{\bmik}{\bikd}
\safemath{\bmil}{\bild}
\safemath{\bmim}{\bimd}
\safemath{\bmin}{\bind}
\safemath{\bmio}{\biod}
\safemath{\bmip}{\bipd}
\safemath{\bmiq}{\biqd}
\safemath{\bmir}{\bird}
\safemath{\bmis}{\bisd}
\safemath{\bmit}{\bitd}
\safemath{\bmiu}{\biud}
\safemath{\bmiv}{\bivd}
\safemath{\bmiw}{\biwd}
\safemath{\bmix}{\bixd}
\safemath{\bmiy}{\biyd}
\safemath{\bmiz}{\bizd}

\safemath{\bmxi}{\bixid}
\safemath{\bmlambda}{\bilambdad}
\safemath{\bmmu}{\bimud}
\safemath{\bmtheta}{\bithetad}
\safemath{\bmphi}{\biphid}

\safemath{\bA}{\mathbf{A}}
\safemath{\bB}{\mathbf{B}}
\safemath{\bC}{\mathbf{C}}
\safemath{\bD}{\mathbf{D}}
\safemath{\bE}{\mathbf{E}}
\safemath{\bF}{\mathbf{F}}
\safemath{\bG}{\mathbf{G}}
\safemath{\bH}{\mathbf{H}}
\safemath{\bI}{\mathbf{I}}
\safemath{\bJ}{\mathbf{J}}
\safemath{\bK}{\mathbf{K}}
\safemath{\bL}{\mathbf{L}}
\safemath{\bM}{\mathbf{M}}
\safemath{\bN}{\mathbf{N}}
\safemath{\bO}{\mathbf{O}}
\safemath{\bP}{\mathbf{P}}
\safemath{\bQ}{\mathbf{Q}}
\safemath{\bR}{\mathbf{R}}
\safemath{\bS}{\mathbf{S}}
\safemath{\bT}{\mathbf{T}}
\safemath{\bU}{\mathbf{U}}
\safemath{\bV}{\mathbf{V}}
\safemath{\bW}{\mathbf{W}}
\safemath{\bX}{\mathbf{X}}
\safemath{\bY}{\mathbf{Y}}
\safemath{\bZ}{\mathbf{Z}}

\safemath{\bZero}{\mathbf{0}}
\safemath{\bOne}{\mathbf{1}}
\safemath{\bDelta}{\mathbf{\Delta}}
\safemath{\bLambda}{\mathbf{\UpLambda}}
\safemath{\bPhi}{\mathbf{\Upphi}}
\safemath{\bSigma}{\mathbf{\Upsigma}}
\safemath{\bOmega}{\mathbf{\Upomega}}
\safemath{\bTheta}{\mathbf{\Uptheta}}

\bmdefine{\biAd}{A}
\bmdefine{\biBd}{B}
\bmdefine{\biCd}{C}
\bmdefine{\biDd}{D}
\bmdefine{\biEd}{E}
\bmdefine{\biFd}{F}
\bmdefine{\biGd}{G}
\bmdefine{\biHd}{H}
\bmdefine{\biId}{I}
\bmdefine{\biJd}{J}
\bmdefine{\biKd}{K}
\bmdefine{\biLd}{L}
\bmdefine{\biMd}{M}
\bmdefine{\biOd}{N}
\bmdefine{\biPd}{O}
\bmdefine{\biQd}{P}
\bmdefine{\biRd}{R}
\bmdefine{\biSd}{S}
\bmdefine{\biTd}{T}
\bmdefine{\biUd}{U}
\bmdefine{\biVd}{V}
\bmdefine{\biWd}{W}
\bmdefine{\biXd}{X}
\bmdefine{\biYd}{Y}
\bmdefine{\biZd}{Z}

\bmdefine{\biDelta}{\Delta}
\bmdefine{\biLambda}{\Lambda}
\bmdefine{\biPhi}{\Phi}
\bmdefine{\biSigma}{\Sigma}
\bmdefine{\biOmega}{\Omega}
\bmdefine{\biTheta}{\Theta}

\safemath{\bimA}{\biAd}
\safemath{\bimB}{\biBd}
\safemath{\bimC}{\biCd}
\safemath{\bimD}{\biDd}
\safemath{\bimE}{\biEd}
\safemath{\bimF}{\biFd}
\safemath{\bimG}{\biGd}
\safemath{\bimH}{\biHd}
\safemath{\bimI}{\biId}
\safemath{\bimJ}{\biJd}
\safemath{\bimK}{\biKd}
\safemath{\bimL}{\biLd}
\safemath{\bimM}{\biMd}
\safemath{\bimN}{\biNd}
\safemath{\bimO}{\biOd}
\safemath{\bimP}{\biPd}
\safemath{\bimQ}{\biQd}
\safemath{\bimR}{\biRd}
\safemath{\bimS}{\biSd}
\safemath{\bimT}{\biTd}
\safemath{\bimU}{\biUd}
\safemath{\bimV}{\biVd}
\safemath{\bimW}{\biWd}
\safemath{\bimX}{\biXd}
\safemath{\bimY}{\biYd}
\safemath{\bimZ}{\biZd}

\safemath{\bimDelta}{\biDelta}
\safemath{\bimLambda}{\biLambda}
\safemath{\bimPhi}{\biPhi}
\safemath{\bimSigma}{\biSigma}
\safemath{\bimOmega}{\biOmega}
\safemath{\bimTheta}{\biTheta}

\safemath{\setA}{\mathcal{A}}
\safemath{\setB}{\mathcal{B}}
\safemath{\setC}{\mathcal{C}}
\safemath{\setD}{\mathcal{D}}
\safemath{\setE}{\mathcal{E}}
\safemath{\setF}{\mathcal{F}}
\safemath{\setG}{\mathcal{G}}
\safemath{\setH}{\mathcal{H}}
\safemath{\setI}{\mathcal{I}}
\safemath{\setJ}{\mathcal{J}}
\safemath{\setK}{\mathcal{K}}
\safemath{\setL}{\mathcal{L}}
\safemath{\setM}{\mathcal{M}}
\safemath{\setN}{\mathcal{N}}
\safemath{\setO}{\mathcal{O}}
\safemath{\setP}{\mathcal{P}}
\safemath{\setQ}{\mathcal{Q}}
\safemath{\setR}{\mathcal{R}}
\safemath{\setS}{\mathcal{S}}
\safemath{\setT}{\mathcal{T}}
\safemath{\setU}{\mathcal{U}}
\safemath{\setV}{\mathcal{V}}
\safemath{\setW}{\mathcal{W}}
\safemath{\setX}{\mathcal{X}}
\safemath{\setY}{\mathcal{Y}}
\safemath{\setZ}{\mathcal{Z}}
\safemath{\emptySet}{\varnothing}

\safemath{\colA}{\mathscr{A}}
\safemath{\colB}{\mathscr{B}}
\safemath{\colC}{\mathscr{C}}
\safemath{\colD}{\mathscr{D}}
\safemath{\colE}{\mathscr{E}}
\safemath{\colF}{\mathscr{F}}
\safemath{\colG}{\mathscr{G}}
\safemath{\colH}{\mathscr{H}}
\safemath{\colI}{\mathscr{I}}
\safemath{\colJ}{\mathscr{J}}
\safemath{\colK}{\mathscr{K}}
\safemath{\colL}{\mathscr{L}}
\safemath{\colM}{\mathscr{M}}
\safemath{\colN}{\mathscr{N}}
\safemath{\colO}{\mathscr{O}}
\safemath{\colP}{\mathscr{P}}
\safemath{\colQ}{\mathscr{Q}}
\safemath{\colR}{\mathscr{R}}
\safemath{\colS}{\mathscr{S}}
\safemath{\colT}{\mathscr{T}}
\safemath{\colU}{\mathscr{U}}
\safemath{\colV}{\mathscr{V}}
\safemath{\colW}{\mathscr{W}}
\safemath{\colX}{\mathscr{X}}
\safemath{\colY}{\mathscr{Y}}
\safemath{\colZ}{\mathscr{Z}}

\safemath{\opA}{\mathbb{A}}
\safemath{\opB}{\mathbb{B}}
\safemath{\opC}{\mathbb{C}}
\safemath{\opD}{\mathbb{D}}
\safemath{\opE}{\mathbb{E}}
\safemath{\opF}{\mathbb{F}}
\safemath{\opG}{\mathbb{G}}
\safemath{\opH}{\mathbb{H}}
\safemath{\opI}{\mathbb{I}}
\safemath{\opJ}{\mathbb{J}}
\safemath{\opK}{\mathbb{K}}
\safemath{\opL}{\mathbb{L}}
\safemath{\opM}{\mathbb{M}}
\safemath{\opN}{\mathbb{N}}
\safemath{\opO}{\mathbb{O}}
\safemath{\opP}{\mathbb{P}}
\safemath{\opQ}{\mathbb{Q}}
\safemath{\opR}{\mathbb{R}}
\safemath{\opS}{\mathbb{S}}
\safemath{\opT}{\mathbb{T}}
\safemath{\opU}{\mathbb{U}}
\safemath{\opV}{\mathbb{V}}
\safemath{\opW}{\mathbb{W}}
\safemath{\opX}{\mathbb{X}}
\safemath{\opY}{\mathbb{Y}}
\safemath{\opZ}{\mathbb{Z}}
\safemath{\opZero}{\mathbb{O}}
\safemath{\identityop}{\opI}

\safemath{\veca}{\bma}
\safemath{\vecb}{\bmb}
\safemath{\vecc}{\bmc}
\safemath{\vecd}{\bmd}
\safemath{\vece}{\bme}
\safemath{\vecf}{\bmf}
\safemath{\vecg}{\bmg}
\safemath{\vech}{\bmh}
\safemath{\veci}{\bmi}
\safemath{\vecj}{\bmj}
\safemath{\veck}{\bmk}
\safemath{\vecl}{\bml}
\safemath{\vecm}{\bmm}
\safemath{\vecn}{\bmn}
\safemath{\veco}{\bmo}
\safemath{\vecp}{\bmmp}
\safemath{\vecq}{\bmq}
\safemath{\vecr}{\bmr}
\safemath{\vecs}{\bms}
\safemath{\vect}{\bmt}
\safemath{\vecu}{\bmu}
\safemath{\vecv}{\bmv}
\safemath{\vecw}{\bmw}
\safemath{\vecx}{\bmx}
\safemath{\vecy}{\bmy}
\safemath{\vecz}{\bmz}

\safemath{\veczero}{\bmzero}
\safemath{\vecone}{\bmone}
\safemath{\vecxi}{\bmxi}
\safemath{\veclambda}{\bmlambda}
\safemath{\vecmu}{\bmmu}
\safemath{\vectheta}{\bmtheta}
\safemath{\vecphi}{\bmphi}

\safemath{\matA}{\bA}
\safemath{\matB}{\bB}
\safemath{\matC}{\bC}
\safemath{\matD}{\bD}
\safemath{\matE}{\bE}
\safemath{\matF}{\bF}
\safemath{\matG}{\bG}
\safemath{\matH}{\bH}
\safemath{\matI}{\bI}
\safemath{\matJ}{\bJ}
\safemath{\matK}{\bK}
\safemath{\matL}{\bL}
\safemath{\matM}{\bM}
\safemath{\matN}{\bN}
\safemath{\matO}{\bO}
\safemath{\matP}{\bP}
\safemath{\matQ}{\bQ}
\safemath{\matR}{\bR}
\safemath{\matS}{\bS}
\safemath{\matT}{\bT}
\safemath{\matU}{\bU}
\safemath{\matV}{\bV}
\safemath{\matW}{\bW}
\safemath{\matX}{\bX}
\safemath{\matY}{\bY}
\safemath{\matZ}{\bZ}
\safemath{\matzero}{\bmzero}

\safemath{\matDelta}{\bDelta}
\safemath{\matLambda}{\bLambda}
\safemath{\matPhi}{\bPhi}
\safemath{\matSigma}{\bSigma}
\safemath{\matOmega}{\bOmega}
\safemath{\matTheta}{\bTheta}

\safemath{\matidentity}{\matI}
\safemath{\matone}{\matO}

\safemath{\rnda}{A}
\safemath{\rndb}{B}
\safemath{\rndc}{C}
\safemath{\rndd}{D}
\safemath{\rnde}{E}
\safemath{\rndf}{F}
\safemath{\rndg}{G}
\safemath{\rndh}{H}
\safemath{\rndi}{I}
\safemath{\rndj}{J}
\safemath{\rndk}{K}
\safemath{\rndl}{L}
\safemath{\rndm}{M}
\safemath{\rndn}{N}
\safemath{\rndo}{O}
\safemath{\rndp}{P}
\safemath{\rndq}{Q}
\safemath{\rndr}{R}
\safemath{\rnds}{S}
\safemath{\rndt}{T}
\safemath{\rndu}{U}
\safemath{\rndv}{V}
\safemath{\rndw}{W}
\safemath{\rndx}{X}
\safemath{\rndy}{Y}
\safemath{\rndz}{Z}

\safemath{\rveca}{\bimA}
\safemath{\rvecb}{\bimB}
\safemath{\rvecc}{\bimC}
\safemath{\rvecd}{\bimD}
\safemath{\rvece}{\bimE}
\safemath{\rvecf}{\bimF}
\safemath{\rvecg}{\bimG}
\safemath{\rvech}{\bimH}
\safemath{\rveci}{\bimI}
\safemath{\rvecj}{\bimJ}
\safemath{\rveck}{\bimK}
\safemath{\rvecl}{\bimL}
\safemath{\rvecm}{\bimM}
\safemath{\rvecn}{\bimN}
\safemath{\rveco}{\bomO}
\safemath{\rvecp}{\bimP}
\safemath{\rvecq}{\bimQ}
\safemath{\rvecr}{\bimR}
\safemath{\rvecs}{\bimS}
\safemath{\rvect}{\bimT}
\safemath{\rvecu}{\bimU}
\safemath{\rvecv}{\bimV}
\safemath{\rvecw}{\bimW}
\safemath{\rvecx}{\bimX}
\safemath{\rvecy}{\bimY}
\safemath{\rvecz}{\bimZ}

\safemath{\rvecxi}{\bmxi}
\safemath{\rveclambda}{\bmlambda}
\safemath{\rvecmu}{\bmmu}
\safemath{\rvectheta}{\bmtheta}
\safemath{\rvecphi}{\bmphi}

\safemath{\rmatA}{\bimA}
\safemath{\rmatB}{\bimB}
\safemath{\rmatC}{\bimC}
\safemath{\rmatD}{\bimD}
\safemath{\rmatE}{\bimE}
\safemath{\rmatF}{\bimF}
\safemath{\rmatG}{\bimG}
\safemath{\rmatH}{\bimH}
\safemath{\rmatI}{\bimI}
\safemath{\rmatJ}{\bimJ}
\safemath{\rmatK}{\bimK}
\safemath{\rmatL}{\bimL}
\safemath{\rmatM}{\bimM}
\safemath{\rmatN}{\bimN}
\safemath{\rmatO}{\bimO}
\safemath{\rmatP}{\bimP}
\safemath{\rmatQ}{\bimQ}
\safemath{\rmatR}{\bimR}
\safemath{\rmatS}{\bimS}
\safemath{\rmatT}{\bimT}
\safemath{\rmatU}{\bimU}
\safemath{\rmatV}{\bimV}
\safemath{\rmatW}{\bimW}
\safemath{\rmatX}{\bimX}
\safemath{\rmatY}{\bimY}
\safemath{\rmatZ}{\bimZ}

\safemath{\rmatDelta}{\bimDelta}
\safemath{\rmatLambda}{\bimLambda}
\safemath{\rmatPhi}{\bimPhi}
\safemath{\rmatSigma}{\bimSigma}
\safemath{\rmatOmega}{\bimOmega}
\safemath{\rmatTheta}{\bimTheta}
\usepackage{amssymb}
\usepackage{amsfonts}
\usepackage{mathrsfs}
\usepackage{xspace}
\usepackage{bm}

\usepackage{scalerel}

\newenvironment{textbmatrix}{	\setlength{\arraycolsep}{2.5pt}%
								\big[\begin{matrix}}{\end{matrix}\big]%
								\raisebox{0.08ex}{\vphantom{M}}}

\def\be{\begin{equation}}
\def\ee{\end{equation}}
\def\een{\nonumber \end{equation}}
\def\mat{\begin{bmatrix}}
\def\emat{\end{bmatrix}}
\def\btm{\begin{textbmatrix}}
\def\etm{\end{textbmatrix}}

\def\ba#1\ea{\begin{align}#1\end{align}}
\def\bas#1\eas{\begin{align*}#1\end{align*}}
\def\bs#1\es{\begin{split}#1\end{split}} 
\def\bg#1\eg{\begin{gather}#1\end{gather}} 
\def\bi#1\ei{\begin{itemize}#1\end{itemize}}

\safemath{\dirac}{\delta}					% Dirac delta
\safemath{\krond}{\dirac}					% Kronecker delta
\safemath{\upto}{\uparrow}
\safemath{\downto}{\downarrow}
\safemath{\iu}{j}							% imaginary unit
\safemath{\ev}{\lambda}						% eigenvalue
\safemath{\hilseqspace}{l^{2}}				% Hilbert sequence space
\newcommand{\banachfunspace}[1]{\setL^{#1}}	% Banach function space
\safemath{\hilfunspace}{\banachfunspace{2}}	% Hilbert function space
\safemath{\SNR}{\text{\sc snr}} 				% signal to noise ratio
\safemath{\No}{N_0}							% noise spectral density
\safemath{\Es}{E_s}							% energy per symbol
\safemath{\Eb}{E_b}							% energy per bit
\safemath{\EbNo}{\frac{\Eb}{\No}}
\safemath{\EsNo}{\frac{\Es}{\No}}

\DeclareMathOperator{\CHop}{\ensuremath{\opH}} % channel operator
\safemath{\tvir}{\rndh_{\CHop}}				% time-varying impulse response
\safemath{\tvtf}{\rndl_{\CHop}}				% 	-''- transfer function
\safemath{\spf}{\rnds_{\CHop}}				% spreading function
\safemath{\bff}{H_{\CHop}}					% bi-freuqency function

\safemath{\ircf}{r_{h}}						% impulse response correlation fn.
\safemath{\tftvcf}{r_{s}}					% scattering function
\safemath{\tfcf}{r_{l}}						% time-frequency correlation fn.
\safemath{\bfcf}{r_{H}}						% bi-frequency correlation fn.

\safemath{\tcorr}{c_h}						% time-correlation function
\safemath{\scf}{c_{s}}						% spreading function
\safemath{\tfcorr}{c_{l}}					% transfer-function correlation
\safemath{\fcorr}{c_{H}}						% frequency-correlation function

\safemath{\mi}{I}							% mutual information
\safemath{\capacity}{C}						% capacity

\safemath{\normal}{\mathcal{N}}			% normal distribution
\safemath{\jpg}{\mathcal{CN}}			% jointly proper Gaussian
\safemath{\mchain}{\leftrightarrow}		% Markov chain
\safemath{\dB}{\,\mathrm{dB}}
\safemath{\dBm}{\,\mathrm{dBm}}
\safemath{\Hz}{\,\mathrm{Hz}}
\safemath{\kHz}{\,\mathrm{kHz}}
\safemath{\MHz}{\,\mathrm{MHz}}
\safemath{\GHz}{\,\mathrm{GHz}}
\safemath{\s}{\,\mathrm{s}}
\safemath{\ms}{\,\mathrm{ms}}
\safemath{\mus}{\,\mathrm{\mu s}}
\safemath{\ns}{\,\mathrm{ns}}
\safemath{\meter}{\,\mathrm{m}}
\safemath{\mm}{\,\mathrm{mm}}
\safemath{\cm}{\,\mathrm{cm}}
\safemath{\m}{\,\mathrm{m}}
\safemath{\W}{\,\mathrm{W}}
\safemath{\J}{\,\mathrm{J}}
\safemath{\K}{\,\mathrm{K}}
\safemath{\bit}{\,\mathrm{bit}}

\safemath{\define}{=}			% definition

\safemath{\equivalent}{\sim}
\safemath{\distas}{\sim}					% distributed according to
\safemath{\sdiff}{\Delta}				% symmetric set difference

\safemath{\reals}{\mathbb{R}}
\safemath{\positivereals}{\reals_{+}}
\safemath{\integers}{\mathbb{Z}}
\safemath{\posint}{\integers_{+}}
\safemath{\naturals}{\mathbb{N}}
\safemath{\posnaturals}{\naturals_{+}}
\safemath{\complexset}{\mathbb{C}}
\safemath{\rationals}{\mathbb{Q}}

\begin{document}

\maketitle

\vspace{-10pt}

\begin{abstract}
Looped architectures provide an inductive bias toward learning step-by-step procedures for tasks that require compositional reasoning. The depth of \efflayers reached by looping determines the quality of the solution these models find. Similar to deep architectures, looped architectures are prone to a signal propagation problem induced by depth as the halting decision is postponed. In this paper, we address the signal propagation issue by using pre-norm layers and residual scaling. Building on these architectural modifications, we propose \textbf{FPRM}: a Transformer-based \textbf{F}ixed-\textbf{P}oint \textbf{R}easoning \textbf{M}odel that uses fixed-point convergence as an end-to-end halting mechanism in a looped architecture. We show that fixed-point halting allows FPRM to adapt its compute to the difficulty of the task. FPRM proves effective on common reasoning benchmarks, namely Sudoku, Maze, state-tracking and ARC-AGI. The implementation can be found \href{https://github.com/nilskiKonjIzDunava/fprm}{here}.
\end{abstract}

\section{Introduction}

\begin{wrapfigure}[18]{r}{0.42\textwidth}
    \vspace{-4.2em}
    \centering
    \includegraphics[width=\linewidth]{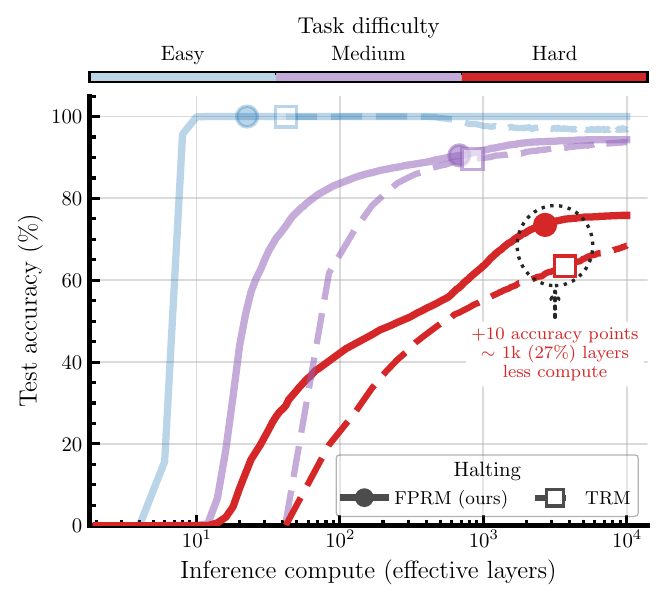}
    \vspace{-1.5em}
    \caption{\textbf{Signal propagation and adaptivity, FPRM vs. TRM:} Sudoku-Extreme performance as a function of compute across difficulty. Despite being non-hierarchical, FPRM scales better, while correctly detecting the accuracy plateaus by using fixed-points for halting.}
    \label{fig:fprm_trm_halting_points}
    \vspace{-5em}
\end{wrapfigure}

Reasoning in neural networks has increasingly been framed as a problem of scaling test-time compute:  a model should be able to spend more computation on inputs it finds harder~\citep{openai2024o1, DBLP:journals/corr/abs-2408-03314}. However, doing so requires two ingredients. \textbf{(1)} \textbf{flexibility:} the possibility of spending a variable amount of compute on the problem. Once the model is flexible, the next step is \textbf{(2)} \textbf{adaptivity:} a way to scale the compute spent on the problem; i.e., when to halt the computation.

The standard way to achieve both is through a Chain-of-Thought (CoT) mechanism~\citep{DBLP:conf/nips/Wei0SBIXCLZ22}. With CoT, the model scales compute through verbalization, and makes halting decisions  based on predicting a specialized halting token. However, this emerging behavior requires a special training regime and hand-crafted reasoning traces~\citep{DBLP:journals/nature/GuoYZSWZXZMBZY025}. This makes the method complex and undermines the desirable property of end-to-end training.

An alternative approach to end-to-end training for reasoning models is emerging in the form of looped architectures~\citep{dehghani2019universaltransformers, DBLP:conf/nips/BansalSBEHGG22, saunshi2025reasoninglatentthoughtspower, hao2025traininglargelanguagemodels}. Whereas in CoT, the compute scales along the sequence dimension, in looped architectures it scales along the depth dimension ($i$):
\begin{equation}
    \label{eqn:looping}
    \bmz_{i+1}=f_\theta(\bmz_i ; \bmx),
\end{equation}
where $f_\theta(\bmz;\bmx)$ is a neural network, $\bmx$ is the input, and $\bmz_i$ is the $i^{\text{th}}$ latent representation. Thus, compute can be increased at test-time by looping in depth, naturally introducing flexibility from the architecture. Looped models have been shown to have an inductive bias towards learning algorithms \citep{yang2024loopedtransformersbetterlearning, fan2025loopedtransformerslengthgeneralization} and have achieved remarkable success on reasoning benchmarks.  For example, Hierarchical Reasoning Model (HRM)~\citep{DBLP:journals/corr/abs-2506-21734} and Tiny Reasoning Model (TRM)~\citep{jolicoeurmartineau2025morerecursivereasoningtiny} incorporate a hierarchical structure into the looping process, proving effective in solving puzzle tasks such as sudoku, maze, and ARC-AGI~\citep{chollet2025arcprize2024technical}. 

The halting decision (i.e., deciding when to stop iterating), however, is no longer trivial in looped models. Most approaches either fix or randomly sample the number of loops~\citep{DBLP:journals/corr/abs-2602-11451, DBLP:journals/corr/abs-2510-25741, geiping2025scalingtesttimecomputelatent, saunshi2025reasoninglatentthoughtspower, DBLP:journals/corr/abs-2404-07839, DBLP:conf/nips/BansalSBEHGG22}, which eliminates adaptivity, or use external modules trained to make halting decisions~\citep{DBLP:journals/corr/abs-2506-21734, jolicoeurmartineau2025morerecursivereasoningtiny, dehghani2019universaltransformers}. The latter is associated with separate Adaptive Computation Time (ACT) networks~\citep{DBLP:journals/corr/Graves16}, which introduce optimization challenges, as they require a continuous relaxation of a discrete objective. As a result, ACT can fail to provide adaptivity, which we demonstrate in the case of  HRM and TRM (Figures~\ref{fig:state-tracking-per-k},~\ref{fig:adaptivity_sudoku}%, more discussions in~\Cref{sec:act-adaptive}
). We mitigate this issue by introducing a different halting mechanism: let the model loop until its hidden state converges to a fixed-point, and use the convergence itself as the halting signal. Unlike ACT, the fixed-point halting mechanism requires no external module and lets the model spend as much compute as a given input demands.

\begin{figure*}[t] % use figure instead of figure* in a single-column document
    \centering

    % -------- Left: combined figure with two subfigures --------
    \begin{minipage}[t]{0.64\textwidth}
        \centering

        \begin{subfigure}[t]{0.48\linewidth}
            \centering
            \includegraphics[width=\linewidth]{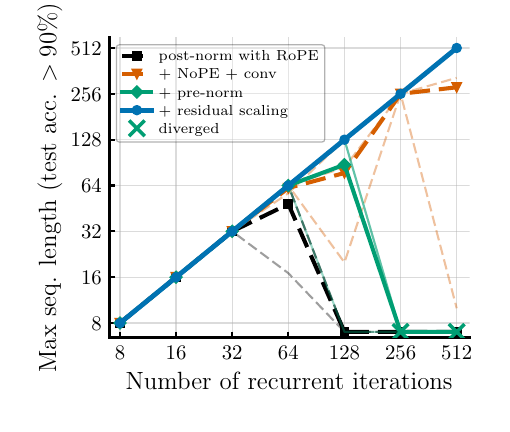}
            \caption{Trainability vs. context length}
            \label{fig:ut_max_len_vs_iters}
        \end{subfigure}
        \hfill
        \begin{subfigure}[t]{0.48\linewidth}
            \centering
            \includegraphics[width=\linewidth]{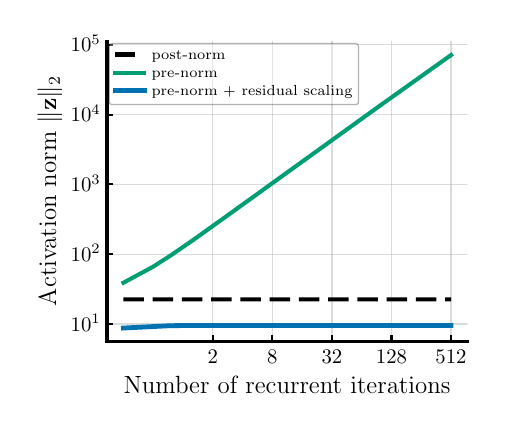}
            \caption{Activation norm at init.}
            \label{fig:ut_activation_norm}
        \end{subfigure}

        \caption{
            \textbf{The blessing and the curse of depth in Looped Transformers.}
            Increasing the number of effective layers can unlock expressivity, but also creates a stability challenge: pre-norm models without residual scaling can diverge in activation norm, while post-norm models may struggle to utilize the signal.
        }
        \label{fig:ut_norm}
    \end{minipage}
    \hfill
    % -------- Right: separate figure --------
    \begin{minipage}[t]{0.32\textwidth}
        \centering
        \includegraphics[
            trim=230pt 437pt 874pt 110pt,
            clip,
            width=\linewidth
        ]{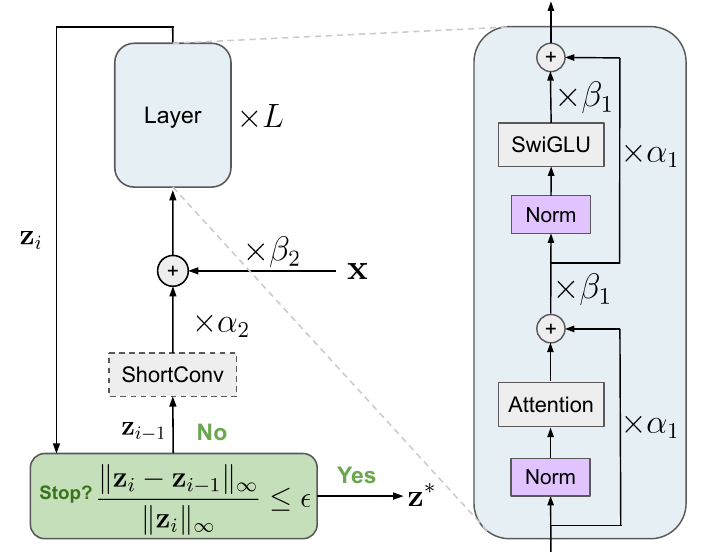}
        \vspace{0.51em}
        \caption{
            \textbf{FPRM architecture.}
            Our fixed-point Looped Transformer uses pre-norm and residual
            scaling for improved signal propagation.
        }
        \label{fig:lofat-architecture}
    \end{minipage}

\end{figure*}

A further challenge arises as the number of loops in the model increases. This is desirable for harder tasks, but unrolling the recursions yields a deep \efflayer. As a result, looped architectures suffer from the same \textbf{signal propagation} issue as deep non-looped networks. To this end, we adapt architectural techniques originally developed to optimize traditional transformers at deep \efflayers, carefully modifying them for the Looped Transformer setting. Our key intuition is that \textit{Looped Transformers are, in part, very deep transformers}.

Viewed this way, one design choice is surprising: looped models commonly use post-norm~\citep{dehghani2019universaltransformers, geiping2025scalingtesttimecomputelatent, DBLP:journals/corr/abs-2506-21734, jolicoeurmartineau2025morerecursivereasoningtiny}. Deep non-looped architectures, by contrast, prefer pre-norm, since post-norm causes unstable training~\citep{DBLP:conf/icml/XiongYHZZXZLWL20, DBLP:conf/nips/NociABOSL22}.
Yet in looped models, post-norm serves a specific purpose. It keeps the magnitude of activations bounded as the model iterates, preventing the hidden states from diverging~\citep{labovich2026stabilitygeneralizationloopedtransformers} (see~\Cref{fig:ut_norm}).
This raises the question: \textit{can we switch the post-norm to pre-norm, while ensuring the activations stay bounded in a different way?} If so, this could make the training of looped models stable at large depths. In this paper, we use residual scaling parameters to give an affirmative answer to this question. In~\Cref{fig:fprm_trm_halting_points}, we observe that fixed-point halting, together with better signal propagation, allows FPRM, a \textbf{non}-hierarchical reasoning model, to outperform TRM at a lower cost on the Sudoku-Extreme benchmark. We provide a description of how the figure was made in~\Cref{sec:fig1-descript}.

We summarize our contributions as follows:

\begin{enumerate}
    \item \textbf{Successfully training a looped pre-norm Transformer.} We modify a Transformer layer to be trainable over deeper \efflayers by switching post-norm to pre-norm and adding residual scaling parameters. 

    \item \textbf{Reaching a fixed-point as a halting mechanism.} To enable stable training, we propose a theoretically motivated modification specific to fixed-point models to limit the oscillation around the fixed-point.
    
    \item \textbf{Proposing the framework FPRM: Fixed-Point Reasoning Model.} We show that it outperforms the baselines on Sudoku-Extreme, Maze-Hard, ARC-AGI-1, and state-tracking benchmarks $A_5$ and $S_5$ among 7M parameter models. Notably, we achieve our results without the hierarchical structure of HRM and TRM. To the best of our knowledge, FPRM is the first Transformer-based reasoning model that exhibits adaptivity of compute to task difficulty (Figures~\ref{fig:fprm_trm_halting_points},\ref{fig:state-tracking-per-k}).
\end{enumerate}
   
\section{Background and Related Works}
\label{sec:background}

\paragraph{Looped models.} Looped architectures address the flexibility requirement of reasoning models by decoupling \efflayer from parameter count, providing variable computation per input without scaling the number of parameters. This makes looping a natural inductive bias for tasks that can be solved by repeatedly applying local or compositional subroutines. Early examples include Neural GPUs~\citep{kaiser2016neuralgpuslearnalgorithms} and Universal Transformers~\citep{dehghani2019universaltransformers}; more recent works show that Looped Transformers can improve length generalization, learn algorithmic structure, and approximate much deeper untied models on reasoning tasks~\citep{yang2024loopedtransformersbetterlearning,fan2025loopedtransformerslengthgeneralization,saunshi2025reasoninglatentthoughtspower,kapl2026growingloopingunifiedview,geiping2025scalingtesttimecomputelatent,giannou2023looped,kohli2026loop}. Recent recursive reasoning models such as HRM~\citep{DBLP:journals/corr/abs-2506-21734} and TRM~\citep{jolicoeurmartineau2025morerecursivereasoningtiny} instantiate this principle in compact architectures. Using small networks, these models have been able to outperform prominent LLM-based reasoning models. At the core of these methods is the idea of the need for a hierarchical looping mechanism, wherein the compute is distributed between a fast-looping component and a slow-looping component. URM \citep{DBLP:journals/corr/abs-2512-14693} follows up on these works and shows that by using short-conv, performance can be further improved. However, as we describe in this paper, previous works leave two central design axes open to improvement: First, \textit{how should a looped model decide how many iterations to run on each input at test-time?} FPRM addresses this through fixed-point halting. Second, \textit{how to make the model utilize its deep \efflayer?} FPRM does so by switching post-norm to pre-norm and adding residual scaling. Importantly, FPRM achieves better performance than TRM and HRM without using a hierarchical structure for the loops.

\paragraph{Adaptive computation.} Classical ACT methods answer the adaptivity question by learning an explicit halting rule, such as halting probabilities, per-token stopping decisions, or learned distributions over computation depth~\citep{dehghani2019universaltransformers,banino2021pondernetlearningponder}.
More recent works extend this idea by picking an adaptive depth for each token ~\citep{bae2025mixtureofrecursionslearningdynamicrecursive, song2026adaponderlmgatedponderinglanguage}. Such mechanisms can, in principle, allocate more computation to harder instances, but they all add a separate learned decision process on top of the recurrent computation itself, and this process is hard to optimize because the halting decision is discrete rather than continuous. Moreover, analyses of HRM find that ACT does not always scale the inference compute with the actual difficulty of the input \citep{ren2026reasoningmodelsreasoningguessing,ge2025hierarchicalreasoningmodelsperspectives}, which can more generally also be the case with CoT \citep{palod2025performativethinkingbrittlecorrelation}, a limitation we also observe and address by instead halting when the iteration reaches a fixed-point.

\paragraph{Deep equilibrium and fixed-point models.} An alternative is to use the convergence of the latent dynamics as the halting criterion. In this view, computation stops when consecutive iterates become sufficiently close $\|\bmz_{i+1}-\bmz_i\| \leq \epsilon$, which corresponds to convergence toward a fixed-point $\bmz^\star = f_\theta(\bmz^\star;\bmx)$. This perspective is also supported by recent mechanistic analyses of looped language models, which observe that recurrent trajectories often converge to fixed-points, suggesting that weight-tied looped models can naturally implement an implicit fixed-point computation~\citep{blayney2026mechanistic}. This view is most extensively developed in Deep Equilibrium Models~\citep{DBLP:conf/nips/BaiKK19}, which replace a finite stack of layers by the equilibrium point of a weight-tied transformation. However, DEQ models usually find the fixed-point via a quasi-Newton approach such as Broyden's method~\citep{Broyden1965ACO, DBLP:conf/nips/BaiKK19} or Anderson acceleration \citep{Anderson1965IterativePF, geng2023torchdeqlibrarydeepequilibrium} rather than fixed-point iterations, which makes them different from looped models. Moreover, DEQ models are difficult to optimize \citep{bai2021stabilizing, DBLP:conf/nips/AnilPLTWBKG22, jolicoeurmartineau2025morerecursivereasoningtiny}, which we address by our proposed architectural changes. Concurrent to our work, Attractor Models \citep{feinashley2026solveloopattractormodels} frame the iterations of TRM as a root-finding problem similar to DEQ, which they solve with Anderson acceleration. Additionally, they propose to optionally include a larger separate network to “guess” the initial latent. Another concurrent work is Equilibrium Reasoners (EqR) \citep{huang2026equilibriumreasonerslearningattractors}, which also adopts a fixed-point perspective and shows that it is possible to scale the compute not only depth-wise, but also by making multiple initial guesses at training and inference time, favoring attractors with a wide basin. Our contributions are largely orthogonal: we target the signal-propagation issues that limit trainable depth, replacing post-norm with pre-norm and residual scaling, and damping the iteration to suppress oscillation around the fixed-point. Moreover, in contrast to these works we do not use the hierarchical looping of TRM. 

\paragraph{Score-based methods.} Diffusion Models \citep {ho2020denoising, song2020score} and Energy-Based Models (EBMs) \citep{lecun2006tutorial} provide another way of iterative computation. An EBM learns a scalar energy function, the core idea being that the energy represents the negative log of the unnormalized density, and solutions correspond to low energy states. EBMs
generate a prediction either by descending the energy landscape \citep{belanger2016structured, belanger2017end} or by sampling with MCMC methods \citep{du2019implicit}. In comparison, diffusion models learn the score of the underlying distribution,  which can be viewed as the gradient of a time-dependent energy function, thus the two are closely connected \citep{salimans2021should, du2023reduce}. While diffusion models typically integrate an SDE over a fixed time interval and in this sense are not adaptive, energy-based reasoning models \citep{du2022learningiterativereasoningenergy, du2024learningiterativereasoningenergy, gladstone2025energy} have an adaptive halting mechanism. In these models, the halting criterion is convergence to a local minimum of the energy landscape. Compared to these methods, we model the generative
process via fixed-point iterations of a learned operator, as opposed to descending through a learned energy function. The setting of learning the operator directly is important for the theoretical analysis of our architectural modifications in \Cref{sec:method}.

\paragraph{Signal propagation in looped models.} As unrolled looped architectures can be viewed as deep networks, they are exposed to the same signal-propagation difficulties that arise in very deep Transformers. In deep sequence models, increasing depth can make optimization harder and can prevent later layers from being effectively used, a phenomenon often discussed as the curse of depth~\citep{DBLP:conf/icml/DongCL21,DBLP:conf/nips/NociABOSL22,DBLP:journals/corr/abs-2502-05795}. 
Pre-norm is the standard remedy for these issues, but in a looped setting it removes a property the architecture relies on: \textbf{bounded activations} (\Cref{fig:ut_norm}). Bounded activations, important for stable training, are typically enforced through post-norm~\citep{geiping2025scalingtesttimecomputelatent, DBLP:journals/corr/abs-2506-21734, jolicoeurmartineau2025morerecursivereasoningtiny}. Indeed, in our own experiments a pre-norm looped model without further modification diverges in activation norm as the iteration count grows, and fails to train at the large loop counts where this growth is most severe (\Cref{fig:ut_norm}). Therefore, our FPRM combines pre-norm with residual scaling to recover bounded and stable dynamics while still allowing gradients and representations to propagate through many iterations.

\section{Method}
\label{sec:method}
Looped architectures rely on bounded activations for stability: as the model loops, an unbounded layer can cause the hidden states to grow without limit (\Cref{fig:ut_norm}). Current methods mostly employ post-norm to satisfy the boundedness condition~\citep{jolicoeurmartineau2025morerecursivereasoningtiny, DBLP:journals/corr/abs-2506-21734, geiping2025scalingtesttimecomputelatent}. However, in fixed-depth models post-norm introduces a signal propagation issue, often associated with unstable training and restricted \efflayer, as reported by~\citet{DBLP:conf/icml/DongCL21, DBLP:conf/nips/NociABOSL22, DBLP:journals/corr/abs-2502-05795}. In~\Cref{sec:pre_post_toy}, we provide evidence for trainability issues in a toy looped model with post-norm, despite stability induced by bounded activations.
 
Consequently, in this section we propose to \textbf{(a) switch to pre-norm} to recover trainability at higher depth, while preserving boundedness by \textbf{(b) scaling the residual stream and the sub-layer outputs} (the attention and feed-forward maps). Together, these modifications yield a layer that is both stable under looping and trainable at large depth. Additionally, we introduce \textbf{(c) a fixed-point halting mechanism} that decides the \efflayer adaptively per input. Interestingly, we find that with these changes we can \textbf{(d) remove the hierarchy} common in recent Looped Transformers~\citep{DBLP:journals/corr/abs-2506-21734, jolicoeurmartineau2025morerecursivereasoningtiny}, resulting in a much simpler model. We provide an overview of our proposed model in~\Cref{fig:lofat-architecture}.
\subsection{Improving signal propagation with pre-norm} \label{sec: sig_prop}
We start by introducing pre-norm and post-norm in a Transformer layer. A Transformer layer consists of two sub-layers (with $f_{\theta^\ell}^\ell(.)$ denoting the $\ell^{th}\in\{1, \dots, 2L\}$ sub-layer): multi-head attention and a feed-forward network. We denote the Looped Transformer model as defined in~\Cref{eqn:looping}, consisting of multiple Transformer layers, by $f_\theta(.)$. In the Transformer layer, the two sub-layers are interleaved with layer normalization ($\mathrm{Norm}$). Two canonical placements of the normalization define two variants of the layer. The original \emph{post-norm} formulation ($\mathrm{Norm}_{\mathrm{post}}$)~\citep{DBLP:conf/nips/VaswaniSPUJGKP17} applies normalization after the residual addition with the residual stream ($\bmz^{\ell-1}$), while the \emph{pre-norm} variant ($\mathrm{Norm}_{\mathrm{pre}}$)~\citep{DBLP:conf/icml/XiongYHZZXZLWL20} applies normalization to the input of each sub-layer without modifying the residual stream:
\begin{equation*}
    \bmz^\ell = \mathrm{Norm}_{\mathrm{post}}\!\left(\bmz^{\ell-1} + f^\ell_{\theta^\ell}\!\left(\mathrm{Norm}_{\mathrm{pre}}\!\left(\bmz^{\ell-1}\right)\right)\right), \qquad \ell = 1, \ldots, 2L,
\end{equation*}
where $L$ is the number of layers (Transformer blocks). In fixed-depth models, both normalization placements have been linked to training issues at large depth: post-norm bounds activation magnitudes but induces a signal propagation problem~\citep{DBLP:conf/icml/Kim0POKYSHSY25, DBLP:conf/nips/NociABOSL22}, while pre-norm improves signal propagation but causes exponential growth in residual magnitude~\citep{DBLP:conf/icml/Kim0POKYSHSY25, DBLP:conf/icml/XiongYHZZXZLWL20}. Therefore, while using pre-norm in a deep neural network is desirable from the signal propagation standpoint, it can introduce instability due to unbounded activations. 

Motivated by these observations, we investigate both normalization placements in Looped Transformers~\citep{saunshi2025reasoninglatentthoughtspower, dehghani2019universaltransformers}. In~\Cref{fig:ut_max_len_vs_iters}, we test the positive correlation between the \efflayer and the expressivity (maximum sequence length with $>\!90\%$ test accuracy) of a Looped Transformer for the state-tracking task $A_5$~\citep{merrill2025illusionstatestatespacemodels}. We observe that increasing the \efflayer of a Looped Transformer with post-norm does not translate into improved expressivity, while a pre-norm variant diverges at larger depth. This divergence can be attributed to the exponential growth of the activations, apparent in~\Cref{fig:ut_activation_norm}.
Therefore, to use pre-norm and improve signal propagation in deep looped models, we must first stabilize it, which we do in the following.

\subsection{Recovering boundedness via residual scaling}
\label{sec:bounded}

While pre-norm mitigates trainability problems in looped architectures, it removes the necessary boundedness condition that motivated the use of post-norm in them. This effect can be observed in~\Cref{fig:ut_activation_norm}, where the activations of the pre-norm model grow with deeper \efflayer, causing trainability issues observable in~\Cref{fig:ut_max_len_vs_iters}. Consequently, we propose to restore boundedness by introducing scaling parameters applied at two different scales: one over each sub-layer of the network, and one across the iterations.

\paragraph{Layer-wise residual scaling.} Within a single application of $f_\theta\left(\bmz; \bmx\right)$, the residual stream and sub-layer output $f_{\theta^\ell}^\ell(\bmz^{\ell-1})$ are weighted by tied scalars $(\alpha_1, \beta_1)$ shared across all $L$ layers:
\begin{lightgraybox}
    \begin{equation}
        \bmz^\ell = \alpha_1\, \bmz^{\ell-1} + \beta_1\, f^\ell_{\theta^\ell}\!\left(\mathrm{Norm}_{\mathrm{pre}}\!\left(\bmz^{\ell-1}\right)\right), \qquad \ell = 1, \ldots, 2L.
        \label{eq:layer_scaling}
    \end{equation}
\end{lightgraybox}
\paragraph{Iteration-wise input mixing.} Between consecutive applications of $f_\theta\left(\bmz;\bmx\right)$, we re-inject the input $\bmx$ with tied scalars $(\alpha_2, \beta_2)$ shared across all iterations~\citep{DBLP:conf/nips/BaiKK19}:
\begin{lightgraybox}
    \begin{equation}
        \bmz_{i+1}^0 = \alpha_2\, \bmz_i^{2L} + \beta_2\, \bmx.
        \label{eq:iter_mixing}
    \end{equation}
\end{lightgraybox}
The two scaling schemes are not independent. With an appropriate coupling between them, the resulting recurrence is \textit{bounded for any input}, resulting in a stable looping~\citep{DBLP:conf/icml/OrvietoSGFGPD23}. In the following statement, we formalize this claim: 
\begin{lightgrayframe}    
    \begin{theorem}[Boundedness of FPRM iterates]
    \label{thm:bounded-fp-iterates}
    Consider the model defined by Equations~\ref{eq:layer_scaling} and~\ref{eq:iter_mixing}, and assume each layer map satisfies
    $\Vert f^\ell_{\theta^\ell}(\bmu)\Vert \leq c_f$ for all $\ell$ and input $\bmu$. Let $0\leq \alpha_1, \alpha_2 < 1$, and set
    \begin{equation*}
        \beta_2 = 1 - \alpha_2\alpha_1^{2L},
        \qquad
        \beta_1 = \frac{\beta_2\,(1-\alpha_1)}{1-\alpha_1^{2L}}.
    \end{equation*}
    Then the fixed-point iterates $\{\bmz_i^0\}_{i\geq 0}$ from~\Cref{eq:iter_mixing} are bounded, and if $\bmz_i^0 \to \bmz_\infty^0$, then
    \begin{equation*}
        \Vert \bmz_\infty^0 \Vert \leq \Vert \bmx \Vert + \alpha_2\, c_f.
    \end{equation*}
    \end{theorem}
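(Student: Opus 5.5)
The plan is to unroll the recurrence in two stages. First, bound a single application of $f_\theta$, i.e.\ the map $\bmz_i^0\mapsto\bmz_i^{2L}$, in terms of $\Vert\bmz_i^0\Vert$. Then feed this into the iteration-wise mixing of \Cref{eq:iter_mixing} to get a scalar linear recursion for $a_i:=\Vert\bmz_i^0\Vert$ that contracts toward $\Vert\bmx\Vert+\alpha_2 c_f$. The two choices of $\beta_1,\beta_2$ in the statement are exactly what make this recursion a convex combination.

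\emph{Step 1 (within one iteration).} Iterating \Cref{eq:layer_scaling} over $\ell=1,\dots,2L$ gives the closed form
\begin{equation*}
\bmz_i^{2L}=\alpha_1^{2L}\bmz_i^0+\beta_1\sum_{\ell=1}^{2L}\alpha_1^{2L-\ell} f^\ell_{\theta^\ell}\!\left(\mathrm{Norm}_{\mathrm{pre}}(\bmz_i^{\ell-1})\right).
\end{equation*}
This follows by a trivial induction. By the triangle inequality and $\Vert f^\ell_{\theta^\ell}\Vert\le c_f$, we get
\begin{equation*}
\Vert\bmz_i^{2L}\Vert\le\alpha_1^{2L}\Vert\bmz_i^0\Vert+\beta_1 c_f\,\frac{1-\alpha_1^{2L}}{1-\alpha_1}=\alpha_1^{2L}\Vert\bmz_i^0\Vert+\beta_2 c_f .
\end{equation*}
The equality uses the prescribed value of $\beta_1$. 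The geometric-sum formula is valid since $\alpha_1<1$, and the case $\alpha_1=0$ is consistent with it.

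\emph{Step 2 (across iterations).} Substituting into \Cref{eq:iter_mixing} gives
\begin{equation*}
a_{i+1}\le\alpha_2\alpha_1^{2L}a_i+\beta_2(\alpha_2 c_f+\Vert\bmx\Vert).
\end{equation*}
Set $\rho:=\alpha_2\alpha_1^{2L}\in[0,1)$, so that $\beta_2=1-\rho$. The recursion then reads $a_{i+1}\le\rho a_i+(1-\rho)M$ with $M:=\Vert\bmx\Vert+\alpha_2c_f$. An induction gives $a_i\le\max(a_0,M)$ for all $i$, which proves boundedness. Unrolling also gives $a_i\le\rho^i a_0+(1-\rho^i)M$, so $\limsup_i a_i\le M$.

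\emph{Step 3 (limit).} If $\bmz_i^0\to\bmz_\infty^0$, continuity of the norm gives $\Vert\bmz_\infty^0\Vert=\lim_i a_i\le M$, which is the claimed bound. This step needs no assumption that the limit is a genuine fixed point of the composed map, which is convenient since no continuity of the $f^\ell$ is assumed.

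The only delicate point is Step 1. One has to track the geometric weights $\alpha_1^{2L-\ell}$ correctly so that the $\beta_1$ normalization exactly cancels the sum $\frac{1-\alpha_1^{2L}}{1-\alpha_1}$. The bound on $f^\ell$ must be used uniformly, since its argument $\mathrm{Norm}_{\mathrm{pre}}(\bmz^{\ell-1})$ is arbitrary. Everything after that is a standard contraction-of-a-scalar-recursion argument.
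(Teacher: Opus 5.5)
Your proposal is correct and follows essentially the same route as the paper's proof. Both unroll the $2L$ scaled residual layers into $\alpha_1^{2L}\bmz_i^0$ plus a geometrically weighted sum of sub-layer outputs bounded by $c_f$, substitute this into the input-mixing step, and use the prescribed $\beta_1,\beta_2$ to reduce the coefficients to $1$ and $\alpha_2$. The only difference is cosmetic: the paper unrolls the vector recursion across iterations before taking norms, whereas you first pass to the scalar inequality $a_{i+1}\le\rho a_i+(1-\rho)M$, which also gives you the explicit uniform bound $\max(\Vert\bmz_0^0\Vert, M)$.
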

    \noindent The proof is in~\Cref{app:alphas-proof}. 
\end{lightgrayframe}

Note that the boundedness condition of the sequence model in ~\Cref{thm:bounded-fp-iterates} is satisfied when using pre-norm, as shown by~\citet{DBLP:conf/icml/KimPM21}. However, boundedness still does not guarantee that the looping to converge to a fixed-point, which we propose to utilize for adaptivity. In the following, we show that there exists a choice of $\alpha_2$ that satisfies this requirement. Consequently, as empirically shown by~\citet{DBLP:conf/nips/BansalSBEHGG22}, the model may become locally contractive during training. 

\begin{theorem}[Small $\alpha_2$ implies convergence]
\label{thm:damped-fp-convergence}
Let $\lambda_f$ be the Lipschitz constant of the $L$-layer model defined by
\Cref{eq:layer_scaling}, i.e.\ the map $\bmz^0 \mapsto \bmz^{2L}$. Then the
looped step $f_\theta(\,\cdot\,;\bmx)$ of Equations~\ref{eq:layer_scaling}--\ref{eq:iter_mixing}
is Lipschitz in its first argument with constant $\alpha_2\lambda_f$. In particular,
if $\alpha_2\lambda_f < 1$, then $f_\theta(\,\cdot\,;\bmx)$ is a contraction and the
iteration $\bmz_{i+1} = f_\theta(\bmz_i;\bmx)$ converges to a unique fixed-point
$\bmz^\star=f_\theta(\bmz^*;\bmx)$ at a linear rate:
\begin{equation*}
    \left\Vert f_\theta(\bmz_i;\bmx) - \bmz_i \right\Vert
    \leq
    \left(\alpha_2\lambda_f\right)^i
    \left\Vert f_\theta(\bmz_0;\bmx) - \bmz_0 \right\Vert.
\end{equation*}
\end{theorem}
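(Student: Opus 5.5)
The plan is to write the looped step as a composition and bound its Lipschitz constant directly. Let $g:\bmz^0\mapsto\bmz^{2L}$ be the map obtained by applying \Cref{eq:layer_scaling} for $\ell=1,\dots,2L$. By definition, $\lambda_f$ is its Lipschitz constant. By \Cref{eq:iter_mixing}, the looped step is then
\begin{equation*}
    f_\theta(\bmz;\bmx)=\alpha_2\, g(\bmz)+\beta_2\,\bmx .
\end{equation*}

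First, I compare two states $\bmz,\bmz'$ under the same input $\bmx$. The injection term $\beta_2\bmx$ is identical for both, so it cancels, and
\begin{equation*}
    \Vert f_\theta(\bmz;\bmx)-f_\theta(\bmz';\bmx)\Vert=\alpha_2\Vert g(\bmz)-g(\bmz')\Vert\leq\alpha_2\lambda_f\Vert\bmz-\bmz'\Vert .
\end{equation*}
This uses only $\alpha_2\geq 0$, which is the standing assumption of \Cref{thm:bounded-fp-iterates}. If one wants $\lambda_f$ explicitly, it can be bounded by $\prod_\ell(\alpha_1+\beta_1\,\mathrm{Lip}(f^\ell\circ\mathrm{Norm}_{\mathrm{pre}}))$. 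The statement, however, takes $\lambda_f$ as given, so this bound is not needed.

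Second, assume $\alpha_2\lambda_f<1$. Then $f_\theta(\cdot;\bmx)$ is a contraction on the complete space $\reals^{d}$ (or $\reals^{n\times d}$ for a sequence of tokens). The Banach fixed-point theorem gives a unique fixed point $\bmz^\star$ and convergence of the iterates to it from any $\bmz_0$.

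Third, the residual bound follows by induction. Since $\bmz_{i+1}=f_\theta(\bmz_i;\bmx)$, we have
\begin{equation*}
    \Vert f_\theta(\bmz_i;\bmx)-\bmz_i\Vert=\Vert f_\theta(\bmz_i;\bmx)-f_\theta(\bmz_{i-1};\bmx)\Vert\leq\alpha_2\lambda_f\Vert\bmz_i-\bmz_{i-1}\Vert .
\end{equation*}
The right-hand side equals $\alpha_2\lambda_f\Vert f_\theta(\bmz_{i-1};\bmx)-\bmz_{i-1}\Vert$. Iterating this $i$ times yields $(\alpha_2\lambda_f)^i\Vert f_\theta(\bmz_0;\bmx)-\bmz_0\Vert$, which is the claimed linear rate.

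There is no substantive obstacle. The only point needing care is bookkeeping: I must make sure the step is taken as $\bmz_i^0\mapsto\bmz_{i+1}^0$, so that the injection happens after $g$ and the constant is $\alpha_2\lambda_f$ rather than something involving $\beta_2$. A secondary point is existence of $\lambda_f$, which requires the pre-norm sub-layers to be Lipschitz. Since the statement defines $\lambda_f$, I will simply assume it is finite.
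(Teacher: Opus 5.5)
Your proof is correct and follows the paper's argument. The paper likewise cancels the injection term to get the Lipschitz constant $\alpha_2\lambda_f$, applies the Banach fixed-point theorem, and iterates $\Vert \bmz_{i+1}-\bmz_i\Vert \leq \alpha_2\lambda_f\Vert \bmz_i-\bmz_{i-1}\Vert$ to obtain the residual rate. The only difference is bookkeeping: with your $g$ denoting $\bmz^0\mapsto\bmz^{2L}$, the paper writes the step as $g(\alpha_2\bmz+\beta_2\bmx)$, whereas your $\alpha_2\, g(\bmz)+\beta_2\bmx$ matches Equation~\ref{eq:iter_mixing} literally for the state $\bmz_i^0$; both orderings give the same constant $\alpha_2\lambda_f$.
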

\noindent The proof is in~\Cref{app:damped-fp-convergence-proof}.

\begin{wrapfigure}[20]{r}{0.5\textwidth}
\vspace{-2.5\baselineskip}
\begin{minipage}{0.5\textwidth}
\begin{algorithm}[H]
\caption{Fixed-point optimizer \textsc{FPOpt}: one damped step with patience-based decay}
\label{alg:fp_optimizer}
\begin{algorithmic}[1]
\Require initial damping $\eta_0$; decay $\gamma \in (0,1)$; patience $P$
\State \textbf{Internal state:} $\eta \gets \eta_0$,\ \ $p \gets P$,\ \ $r^\star \gets \infty$
\Statex
\Procedure{Step}{$\bmz,\, \tilde{\bmz}$}
    \State $r \gets \|\bmz - \tilde{\bmz}\|_\infty / (\|\tilde{\bmz}\|_\infty + \epsilon)$ \Comment{residual}
    \State $\bmz \gets \eta\, \tilde{\bmz} + (1 - \eta)\, \bmz$ \Comment{damped update}
    \If{$r < r^\star$}
        \State $r^\star \gets r$,\ \ $p \gets P$ \Comment{progress: reset}
    \Else
        \State $p \gets p - 1$
        \If{$p \leq 0$ \textbf{and} $r > \tau$}
            \State $\eta \gets \gamma\, \eta$,\ \ $p \gets P$ \Comment{decay $\eta$}
        \EndIf
    \EndIf
    \State \Return $\bmz,\, r$
\EndProcedure
\end{algorithmic}
\end{algorithm}
\end{minipage}
\end{wrapfigure}

While a contractive map is needed for convergence, an excessively contractive one severely limits expressivity~\citep{DBLP:conf/nips/BaiKK19, DBLP:conf/nips/AnilPLTWBKG22}. We avoid this by making $\alpha_1$ and $\alpha_2$ learnable. In practice, we find that initializing the network to be more contractive by setting $\alpha_2$ to be small yields better performance (\Cref{tab:alpha_sweep}). In~\Cref{fig:alpha_distribution}, we observe that after training, the distribution of $\alpha$ values widens, with the median very close to the initial point. Intriguingly, these observations are also in line with the common solutions to signal propagation and rank-collapse problems in deep neural networks~\citep{DBLP:conf/nips/NociABOSL22, DBLP:journals/corr/abs-2502-05795}, suggesting a connection between rank-collapse and the signal propagation issue in looped architectures. Together, these modifications yield a pre-norm Looped Transformer that maintains performance over longer looping horizons before saturating, compared to the post-norm variant (see~\Cref{fig:ut_norm}).

\subsection{Oscillation around the fixed-point}
\label{sec:damping-alg}

So far, we have been able to establish that, given a small enough $\alpha_2$, the model introduced in~\Cref{eq:iter_mixing} becomes locally contractive and converges to a fixed-point. However, in practice the contraction factor of~\Cref{thm:damped-fp-convergence} is not itself guaranteed. For some inputs, we observe that the model often descends into an oscillatory behavior, causing the iteration to stay in a small region of latent space without converging. This non-convergent behavior is not in tension with~\Cref{thm:damped-fp-convergence}, as the theorem gives a sufficient condition for convergence, not a complete characterization of the iteration's behavior. In fact, oscillation around the fixed-point can happen when the Jacobian satisfies certain conditions. 

Linearizing the iteration near a fixed-point $\bmz^\star$ gives $\bmz_{i+1} - \bmz^\star \approx \matJ\,(\bmz_i - \bmz^\star)$, where $\matJ = \partial f_\theta / \partial \bmz \,|_{\bmz^\star}$. Oscillation around $\bmz^\star$ arises when $\matJ$ has an eigenvalue with $\Re(\lambda_i) < 1$ but $|\lambda_i| \geq 1$, in which case the iteration spirals around $\bmz^\star$ rather than contracting toward it. The half-plane condition $\Re(\lambda_i) < 1$ is exactly what licenses a runtime fix that does not require modifying $f_\theta$:

\begin{lightgrayframe}
    \begin{theorem}[Damping stabilizes oscillatory fixed-point dynamics]
    \label{thm:damping-stability}
    Suppose $f_\theta(\,\cdot\,;\bmx)$ is continuously differentiable in a neighborhood of a fixed-point $\bmz^\star$, and that every eigenvalue $\lambda_i$ of the Jacobian $\matJ$ at $\bmz^\star$ satisfies $\Re(\lambda_i) < 1$. Define the damped iteration map
    \begin{equation*}
        g_{\eta,\theta}(\bmz; \bmx) \;:=\; \eta\, f_\theta(\bmz;\bmx) + (1-\eta)\, \bmz.
    \end{equation*}
    Then there exists $\eta_0 \in (0, 1)$ such that, for every $\eta \in (0, \eta_0)$, the iteration $\bmz_{i+1} = g_{\eta,\theta}(\bmz_i;\bmx)$ converges locally to $\bmz^\star$. Moreover, $g_{\eta,\theta}(\,\cdot\,;\bmx)$ and $f_\theta(\,\cdot\,;\bmx)$ have the same fixed-points.
    \end{theorem}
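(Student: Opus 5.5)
The plan is the standard local-stability argument: compute the Jacobian of the damped map, choose $\eta$ so that its spectral radius drops below one, and then pass from the linearization to the nonlinear iteration.

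\textbf{Fixed points.} This part is immediate. Since $\eta>0$, we have $g_{\eta,\theta}(\bmz;\bmx)=\bmz$ if and only if $\eta\,(f_\theta(\bmz;\bmx)-\bmz)=0$, that is, if and only if $f_\theta(\bmz;\bmx)=\bmz$. The two maps therefore have the same fixed points.

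\textbf{Spectrum of the damped Jacobian.} Differentiating at $\bmz^\star$ gives $\matJ_\eta = \eta\matJ + (1-\eta)\matI$. Its eigenvalues are $\mu_i(\eta) = 1-\eta(1-\lambda_i)$, with the same eigenvectors and Jordan structure as $\matJ$. Write $w_i = 1-\lambda_i$. The hypothesis $\Re(\lambda_i)<1$ says exactly that $\Re(w_i)>0$. Then
\begin{equation*}
|\mu_i(\eta)|^2 = 1 - 2\eta\,\Re(w_i) + \eta^2 |w_i|^2 ,
\end{equation*}
which is strictly less than $1$ whenever $0<\eta<2\Re(w_i)/|w_i|^2$. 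Set
\begin{equation*}
\eta_0 = \min\Bigl\{1,\; \min_i \tfrac{2\Re(w_i)}{|w_i|^2}\Bigr\}.
\end{equation*}
This is a minimum over finitely many eigenvalues, so $\eta_0>0$. If the formula gives $\eta_0=1$, replace it by any smaller positive value so that $\eta_0\in(0,1)$. For every $\eta\in(0,\eta_0)$ we then have $\rho(\matJ_\eta)<1$.

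\textbf{Passing to the nonlinear iteration.} Fix $\eta\in(0,\eta_0)$ and pick $\rho(\matJ_\eta)<q<1$. By the standard lemma that for any $\varepsilon>0$ there is an induced norm with $\|\matJ_\eta\|_*\le \rho(\matJ_\eta)+\varepsilon$ (via a scaled Jordan form), there is a norm $\|\cdot\|_*$ with $\|\matJ_\eta\|_*<q$.

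Because $f_\theta$ is $C^1$ near $\bmz^\star$, so is $g_{\eta,\theta}$, and its derivative is continuous. Hence there is a ball $B_r(\bmz^\star)$ in $\|\cdot\|_*$ on which the operator norm of $Dg_{\eta,\theta}$ stays below $q$. The mean value inequality, applied on this convex ball, then gives
\begin{equation*}
\|g_{\eta,\theta}(\bmz)-\bmz^\star\|_* = \|g_{\eta,\theta}(\bmz)-g_{\eta,\theta}(\bmz^\star)\|_* \le q\,\|\bmz-\bmz^\star\|_* .
\end{equation*}
So the ball is invariant under $g_{\eta,\theta}$, and iterates started inside it converge to $\bmz^\star$ linearly at rate $q$. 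This is local convergence. Since all norms on a finite-dimensional space are equivalent, the conclusion holds in any norm.

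\textbf{Main obstacle.} The delicate step is this last one. A nondiagonalizable $\matJ$ rules out a direct eigenvalue argument in the Euclidean norm, because $\|\matJ_\eta\|_2$ can exceed $1$ even when $\rho(\matJ_\eta)<1$. The adapted-norm construction handles this. It must also be combined with continuity of the Jacobian, rather than linearization alone, in order to control the nonlinear remainder uniformly on a neighborhood.
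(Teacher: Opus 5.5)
Your proposal is correct and follows the paper's proof essentially step for step. Both arguments use the same identity $g_{\eta,\theta}(\bmz;\bmx)-\bmz=\eta\,(f_\theta(\bmz;\bmx)-\bmz)$ for the fixed points, the same damped eigenvalues $\mu_i(\eta)=1+\eta(\lambda_i-1)$, and the same explicit threshold $\eta_0=\min\{1,\min_i 2(1-\Re(\lambda_i))/|\lambda_i-1|^2\}$. Your additions are improvements rather than a different route: you actually prove the step from $|\mu_i(\eta)|<1$ to local convergence (adapted norm plus mean value inequality), which the paper only asserts, and you handle the case where this formula gives $\eta_0=1$ (e.g.\ all eigenvalues real in $(-1,1)$), which the paper's proof leaves inconsistent with the claimed $\eta_0\in(0,1)$.
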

    \noindent The proof is in~\Cref{app:damping-proof}.
\end{lightgrayframe}
\Cref{thm:damping-stability} shows that a suitable damping factor $\eta$ eliminates the oscillations while preserving the fixed-points of $f_\theta(\bmz;\bmx)$. We measure convergence to a fixed-point at iteration $i$ as
\begin{equation*}
    r_i \;=\; \frac{\Vert \bmz_i - f_\theta(\bmz_i;\bmx) \Vert_\infty}{\Vert f_\theta(\bmz_i;\bmx) \Vert_\infty + \epsilon},
\end{equation*}
which serves as the halting signal: the iteration stops once $r_i$ falls below a tolerance $\tau$. To choose $\eta$ adaptively at inference time, we use a patience mechanism that decreases $\eta$ whenever this residual stops improving. We track the smallest residual observed so far, $r^\star = \min_{j \le i} r_j$. A geometric decay $\eta \gets \gamma\, \eta$ with $\gamma \in (0, 1)$ is applied to the step-size after $P$ consecutive iterations with no improvement in the residuals. The full procedure is given in~\Cref{alg:fp_optimizer}, which is based on the implementation provided by~\citet{movahedi2025fixedpointrnnsinterpolatingdiagonal}.

\subsection{Optimization of fixed-point models}
\label{sec:fp-optimizer}
One advantage of contractive fixed-point models is that they can be trained using truncated back-propagation through time (BPTT)~\citep{DBLP:conf/nips/GengZBWL21}. Let
$\matJ = \frac{\partial f_\theta}{\partial \bmz}(\bmz^\star;\bmx)$
and
$\matP = \frac{\partial f_\theta}{\partial \theta}(\bmz^\star;\bmx)$
denote the Jacobians of $f_\theta$ at the fixed-point with respect to the state and the parameters, respectively. Following the implicit function theorem we can write the gradient w.r.t. the parameters of the model as~\citep{DBLP:conf/nips/BaiKK19}:

\begin{equation}
    \label{eqn:ift}
    \frac{d\bmz^\star}{d\theta} \;=\; (\matI - \matJ)^{-1}\,\matP.
\end{equation}

The Neumann series $(\matI-\matJ)^{-1} = \sum_{j\geq 0}\matJ^{\,j}$ converges, assuming $f_\theta(\bmz;\bmx)$ is contractive. Therefore, truncating the series at depth $k$ yields the estimate:

\begin{equation}
    \label{eqn:neumann_trunc}
    \frac{d\bmz^\star}{d\theta} \;\approx\; {\textstyle\sum_{j=0}^{k-1}} \matJ^{\,j}\, \matP,
\end{equation}
which is closely related to Jacobian-free backpropagation, where the full implicit linear solve is replaced by cheaper approximate gradients~\citep{fung2022jfb}. The truncation depth $k$ trades off computation against accuracy. The following proposition bounds the resulting error under contractivity.

\begin{proposition}[Exponential decay of truncated-BPTT error]
\label{prop:trunc-bptt-error}
Let $\bmz^\star = f_\theta(\bmz^\star; \bmx)$ and $\matJ = \frac{\partial f_\theta}{\partial \bmz}(\bmz^\star;\bmx) \in \mathbb{R}^{D \times D}$. If $\matJ$ is contractive in spectral norm, $\|\matJ\|_2 = \sigma < 1$, then for every $k \geq 0$,
\begin{equation*}
    \left\Vert (\matI - \matJ)^{-1} - {\textstyle\sum_{j=0}^{k-1}} \matJ^{\,j} \right\Vert_F \;\leq\; \sqrt{D}\,\frac{\sigma^k}{1 - \sigma}.
\end{equation*}
\end{proposition}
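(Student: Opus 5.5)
The plan is to identify the truncation error with the tail of the Neumann series, bound that tail in spectral norm by a geometric sum, and then convert to the Frobenius norm via the rank/dimension inequality $\Vert \matA\Vert_F \le \sqrt{D}\,\Vert \matA\Vert_2$ for $\matA\in\mathbb{R}^{D\times D}$.

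\textbf{Step 1: the Neumann series converges.} Since $\|\matJ\|_2=\sigma<1$ and the spectral norm is submultiplicative, we have $\|\matJ^{\,j}\|_2\le\sigma^j$. Hence $\sum_{j\ge0}\matJ^{\,j}$ converges absolutely in operator norm. The identity
\begin{equation*}
    (\matI-\matJ)\sum_{j=0}^{n-1}\matJ^{\,j}=\matI-\matJ^{\,n}
\end{equation*}
holds for every $n$, and $\matJ^{\,n}\to 0$. Letting $n\to\infty$ shows that $\matI-\matJ$ is invertible with $(\matI-\matJ)^{-1}=\sum_{j\ge0}\matJ^{\,j}$.

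\textbf{Step 2: bound the tail.} The error matrix is $\matE_k=\sum_{j\ge k}\matJ^{\,j}$. A cleaner closed form is $\matE_k=\matJ^{\,k}(\matI-\matJ)^{-1}$, which follows by multiplying the finite identity from Step 1 by $(\matI-\matJ)^{-1}$. By the triangle inequality and submultiplicativity,
\begin{equation*}
    \|\matE_k\|_2\le\sum_{j\ge k}\sigma^j=\frac{\sigma^k}{1-\sigma}.
\end{equation*}
Equivalently, $\|\matE_k\|_2\le\|\matJ\|_2^k\,\|(\matI-\matJ)^{-1}\|_2\le\sigma^k/(1-\sigma)$, where the inverse is bounded by the same geometric series.

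\textbf{Step 3: pass to the Frobenius norm.} For any $\matA\in\mathbb{R}^{D\times D}$, $\|\matA\|_F^2=\sum_{i=1}^D s_i(\matA)^2\le D\,s_1(\matA)^2=D\|\matA\|_2^2$, where the $s_i$ are the singular values. Applying this to $\matE_k$ gives the claimed bound $\sqrt{D}\,\sigma^k/(1-\sigma)$. The case $k=0$ is included, since the empty sum is $0$ and the bound reduces to $\|(\matI-\matJ)^{-1}\|_F\le\sqrt{D}/(1-\sigma)$.

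There is no real obstacle. The only point needing care is Step 1: justifying that the infinite series equals the inverse rather than just assuming it, which the telescoping identity handles.
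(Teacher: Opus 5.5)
Your proof is correct and follows essentially the same route as the paper: write the error as the Neumann tail $\sum_{j\ge k}\mathbf{J}^{j}$, bound it by the geometric series $\sum_{j\ge k}\sigma^{j}$ using submultiplicativity, and pay a factor $\sqrt{D}$ to pass from the spectral to the Frobenius norm. The differences are cosmetic: the paper applies $\|\cdot\|_F\le\sqrt{D}\,\|\cdot\|_2$ to each term inside the sum rather than once to the whole tail, and it simply asserts $(\mathbf{I}-\mathbf{J})^{-1}=\sum_{j\ge 0}\mathbf{J}^{j}$, whereas you justify this with the telescoping identity.
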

\noindent The proof is in~\Cref{app:trunc-bptt-proof}. An essentially equivalent result appears in the proof of Theorem 2 of~\citet{DBLP:conf/nips/GengZBWL21}.

\Cref{prop:trunc-bptt-error} allows for a fixed memory footprint during training, essentially decoupling the number of loops from the memory complexity of the model. In the same spirit, HRM~\citep{DBLP:journals/corr/abs-2506-21734} and TRM~\citep{jolicoeurmartineau2025morerecursivereasoningtiny} approximate the gradient with a small number of backward passes at the fixed-point, although TRM argues that the fixed-point condition is unnecessary in practice. 

\subsection{The fixed-point reasoning model}
\begin{wrapfigure}{r}{0.5\textwidth}
\vspace{-4.5em}
\begin{minipage}{0.5\textwidth}
\begin{algorithm}[H]
\caption{FPRM training loop with truncated BPTT and deep supervision}
\label{alg:fprm_training}
\begin{algorithmic}[1]
\Require Model $f_\theta$; prediction head $h_\phi$; fixed-point optimizer \textsc{FPOpt}; model optimizer \textsc{ModelOpt}; input $\bmx$; target $\bmy$; BPTT depth $K$; initial state $\bmz_0$
\State $\bmz \gets \bmz_0$
\While{\textsc{FPOpt}.\textsc{cont}()} \Comment{outer loop}
    \For{$k = 1, \ldots, K$} \Comment{BPTT window}
        \State $\tilde\bmz_k \gets f_\theta(\bmz;\, \bmx)$
        \State $\bmz \gets \textsc{FPOpt}.\textsc{step}(\bmz,\, \tilde\bmz_k)$
    \EndFor
    \State $\hat{\bmy} \gets h_\phi(\bmz)$ \Comment{deep supervision}
    \State $\mathcal{L} \gets \textsc{CrossEntropy}(\hat{\bmy},\, \bmy)$
    \State \textsc{ModelOpt}.\textsc{backward}($\mathcal{L}$)
    \State $\bmz \gets \mathrm{detach}(\bmz)$
\EndWhile
\end{algorithmic}
\end{algorithm}
\end{minipage}
\end{wrapfigure}

So far, we introduced modifications to improve signal propagation in Looped Transformers (\Cref{sec: sig_prop}) without sacrificing stability (\Cref{sec:bounded}). These modifications yield fixed-point iterations that can be made non-oscillatory (\Cref{sec:damping-alg}) and trainable through truncated-BPTT (\Cref{sec:fp-optimizer}), making adaptivity through fixed-points reliable. We assemble these components into FPRM, summarized in~\Cref{alg:fprm_training} and illustrated in~\Cref{fig:lofat-architecture}. The result is a Looped Transformer that iterates until its hidden state converges, spending compute proportional to each input's difficulty---the adaptivity our halting mechanism was designed to provide. Since we observed no improvements in our experiments by using the hierarchical structure of~\citet{DBLP:journals/corr/abs-2506-21734}, we opt for a classic looped architecture~\citep{dehghani2019universaltransformers} instead. An overview of the framework is available in~\Cref{alg:fprm_training}. In~\Cref{architecture_details} we provide further details about FPRM. 

During training, the model performs looping in windows of $k$ iterations, with $k$ being a hyperparameter, which determines the truncated-BPTT value. During inference, we set $k=1$. At each forward pass, the fixed-point optimizer introduced in~\Cref{alg:fp_optimizer} is called to dampen the fixed-point iteration steps. Then, we get a prediction from the current state $\bmz$ of the model and perform a deep supervision step, following~\citet{DBLP:journals/corr/abs-2506-21734, jolicoeurmartineau2025morerecursivereasoningtiny}. To truncate the computation graph between deep supervision steps during training, we detach the state $\bmz$ from the graph and stop when the optimizer detects fixed-points. This happens when the residual falls below the tolerance, or the step-size becomes too small.

\section{Experiments}
\label{sec:exps}
We evaluate FPRM against looped reasoning models on puzzle, adaptivity, and signal-propagation benchmarks. Our implementation builds on the public TRM codebase~\citep{jolicoeurmartineau2025morerecursivereasoningtiny} and adopts its deep-supervision training procedure. Additional experimental details are provided in ~\Cref{experimental details}. We first describe the datasets, then evaluate puzzle-solving performance, adaptivity to task difficulty, and depth-induced signal-propagation effects.

\subsection{Dataset description}
In the following, we provide a brief description for each dataset used in this paper. We refer the reader to the cited literature for more information.

\paragraph{Sudoku-Extreme.} The task consists of exceptionally challenging, partially filled $9\times9$ Sudoku puzzles with unique solutions, introduced by~\citet{DBLP:journals/corr/abs-2506-21734}. Each sample is flattened into a sequence of length $81$. The train data consists of $1000$ unique samples, each augmented $1000$ times, giving a total of \textasciitilde$1$M train samples. The test data contains $422{,}786$ samples. The evaluation metric is exact (sequence) accuracy.

\paragraph{Maze-Hard.} The task consists of 
difficult $30\times 30$ shortest-path maze puzzles with unique solutions, introduced by~\citet{DBLP:journals/corr/abs-2506-21734}. The train data and the test data each consist of $1000$ unique samples. Following~\citet{DBLP:journals/corr/abs-2506-21734}, we do not use augmentation for this problem. The evaluation metric is exact (sequence) accuracy.

\paragraph{ARC-1 and ARC-2.} The Abstraction and Reasoning Corpus-1 (ARC-1) and -2 (ARC-2), introduced by~\citet{chollet2025arcprize2024technical}, aim to assess the ability of the model to solve novel problems from minimal examples. ARC-1 consists of 2-3 2D grid-based input-output demonstration pairs with variable size (up to $30{\times}30$), through which the model is supposed to learn an underlying transformation rule and apply it to a held-out sample. The train data and the test data each contain \textasciitilde$400$ samples. ARC-2 has similar characteristics to ARC-1, but with much more challenging problems involving several complex transformations, which are less susceptible to brute-force solutions. The benchmark contains \textasciitilde$1000$ training samples and \textasciitilde$360$ test samples. The evaluation metric is exact (sequence) pass@2 (top-2 predictions) accuracy. Importantly, pretrained reasoning LLMs with CoT often struggle with these tasks. For example, DeepSeek-R1 ($671$B model) achieves $15.8\%$ on ARC-1 and $1.3\%$ on ARC-2, Claude 3.7 Sonnet 16K achieves $28.6\%$ on ARC-1 and $0.7\%$ on ARC-2.

\paragraph{State tracking.} The $A_5$ and $S_5$ state-tracking tasks, introduced by~\citet{merrill2025illusionstatestatespacemodels}, are algorithmic benchmarks based on permutation composition. Each sample consists of an initial state and a sequence of $k$ update permutations; the model must apply the updates in order and predict the resulting final state. $A_5$ uses the alternating group on five elements, i.e., the subgroup of even permutations, while $S_5$ uses the full symmetric group on five elements. These tasks are useful proxies for stateful reasoning problems such as entity tracking, code execution, and game-state tracking, since solving them requires learning a composable update rule rather than memorizing computations at fixed lengths. We train on sequences containing up to 32 updates and evaluate out-of-distribution length generalization on sequences containing up to 128 updates. The evaluation metric is exact final-state accuracy.

\begin{table*}[ht]
    \centering
    \caption{
        Test accuracy on Sudoku-Extreme, Maze-Hard, ARC-AGI-1, and ARC-AGI-2. For each task, the \textbf{best overall} result is bold-face, and the \underline{best result with 7M parameters} is underlined. The results denoted by~$^\dagger$ are reproduced using public \href{https://huggingface.co/arcprize/trm_arc_prize_verification}{checkpoints}. The ARC results for URM, denoted by ~$^\ddagger$, are only reported for Pass@1.
    }
    \vspace{1em}
    \small
    \setlength{\tabcolsep}{6pt} % default is usually 6pt
    \renewcommand{\arraystretch}{1.05}
    \begin{tabular}{llccccc}
        \toprule
        \textbf{Model} & \textbf{\# params.}  & \textbf{Single Loop}& \textbf{Sudoku-Ext.} & \textbf{Maze-Hard} & \textbf{ARC-1} & \textbf{ARC-2} \\
        & & \textbf{(No Hier.)} & \textbf{Pass@1} & \textbf{Pass@1} & \textbf{Pass@2} & \textbf{Pass@2}
        \\
        \midrule
        \multicolumn{7}{l}{\textcolor{textgray}{\textit{Our reproduction attempt}}} \\
        \midrule
        \textcolor{textgray}{Attractor Model} & \textcolor{textgray}{27M}& \textcolor{textgray}{\xmark} & \textcolor{textgray}{71.4\%} & \textcolor{textgray}{--} & \textcolor{textgray}{--} & \textcolor{textgray}{--} \\
        \textcolor{textgray}{TRM} & \textcolor{textgray}{7M} & \textcolor{textgray}{\xmark} & \textcolor{textgray}{72.6\%} & \textcolor{textgray}{79.0\%} & \textcolor{textgray}{40.0\%$^\dagger$} & \textcolor{textgray}{6.2\%$^\dagger$}
        \\ \midrule
        \textit{Reported} \\
        \midrule
        HRM  & 27M & \xmark & 55.0\% & 74.5\% & 40.3\% & 5.0\% \\
        Attractor Model & 27M & \xmark & 91.4\% & \textbf{93.1\%} & -- & -- \\
        URM & 14M & \xmark & 77.6\% & -- & \textbf{$\geq$53.8\%}$^\ddagger$ & \textbf{$\geq$16.0\%}$^\ddagger$ \\
        TRM  & 7M & \xmark & 74.7\% & 85.3\% & 44.6\% & \underline{7.8}\% \\
        EqR & 7M & \xmark & 93.0\% & -- & -- & -- \\
        Attractor Model & 7M & \xmark & 54.3\% & 46.7\% & -- & -- \\
        \rowcolor{fprmbluelight} FPRM & 7M & \cmark & \underline{\textbf{94.2\%}} & \underline{87.0\%} & \underline{47.5\%} & 6.2\% \\
        \bottomrule
    \end{tabular}

    % \vspace{-1em}
    \label{tab:sudoku-maze}
\end{table*}
%\subsection{Puzzles}
\subsection{Puzzle tasks}
\label{sec:puzzle-exps}
We first evaluate FPRM on puzzle problems, namely the Sudoku-Extreme, Maze-Hard~\citep{DBLP:journals/corr/abs-2506-21734}, ARC-1, and ARC-2~\citep{chollet2025arcprize2024technical} benchmarks. These benchmarks were designed to test whether latent recurrent reasoning models can solve symbolic search problems from limited supervision. In the following, we discuss the experimental results. For more information about the baselines, we refer the reader to~\Cref{sec:background}.

% \subsubsection{Comparison between baselines and FPRM}
\Cref{tab:sudoku-maze} demonstrates the effectiveness of FPRM as the best performing model with 7M parameters on Sudoku-Extreme, Maze-Hard, and ARC-1, with performance on par with TRM on ARC-2. On Sudoku-Extreme, FPRM improves upon even larger models. It is worth noting that these results are achieved without the breadth-search fixed-point method proposed by~\citet{huang2026equilibriumreasonerslearningattractors}, which is orthogonal to FPRM's modifications. On Maze-Hard, FPRM underperforms compared to the larger Attractor model with \textasciitilde$\times\!4$ more parameters. However, we note that we were not able to reproduce the results reported in the paper~\citep{feinashley2026solveloopattractormodels}. 

On ARC-2, FPRM performs similarly to the publicly available TRM \href{https://huggingface.co/arcprize/trm_arc_prize_verification}{checkpoints}, but underperforms compared to the TRM results reported in~\citep{jolicoeurmartineau2025morerecursivereasoningtiny} and the URM baseline, which has roughly twice as many parameters. However, considering the recent trend, we emphasize that the ARC benchmarks appear to be much more sensitive to parameter count compared to the other puzzle tasks considered in this section~\citep{DBLP:journals/corr/abs-2602-02156, DBLP:journals/corr/abs-2511-14761}. Therefore, we note that the comparison may not be fair in this case. 

Given that FPRM performs on par with or better than the hierarchical baselines that use post-norm on the tasks from~\Cref{tab:sudoku-maze}, we hypothesize that the hierarchy might be alleviating signal propagation issues in these models. However, if this hypothesis holds, hierarchy's benefit appears limited: in \Cref{fig:sudoku_inference_scaling}, both models improve as \efflayer grows, but the performance gap widens in favor of FPRM, and TRM's performance stays below FPRM at matched \efflayers.

Moreover, in~\Cref{tab:arch_sweep}, we attempt to investigate the effect of introducing our architectural modifications to the Transformer layers used in TRM on its performance on Sudoku-Extreme. We find the proposed modifications to have a detrimental impact on the performance of the models, which we attribute to, among other things, hyperparameter optimization and a careful redesigning of the looping.

\subsection{Adaptivity}
\label{sec:adaptity}

In this section, we demonstrate the adaptivity of FPRM, and compare it to TRM. We use three benchmarks: Sudoku-Extreme from~\Cref{sec:puzzle-exps} and state-tracking benchmarks $A_5$ and $S_5$ introduced in~\cite{merrill2025illusionstatestatespacemodels}. Each benchmark provides an intuitive measure of difficulty. For Sudoku-Extreme, this is the number of empty cells, an established proxy for difficulty~\citep{prates2018problemsolvingedgechaos}. For state-tracking, it is the sequence length.

\begin{figure*}[ht]
  \centering

  \includegraphics[width=0.92\textwidth]{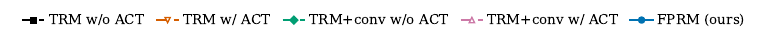}

  \vspace{0.35em}

  \setlength{\tabcolsep}{0pt}%
  \begin{subfigure}[t]{0.245\textwidth}
    \centering
    \includegraphics[width=\linewidth]{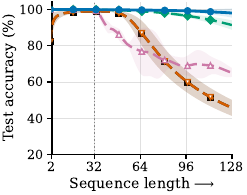}
    \caption{Accuracy, $A_5$}
  \end{subfigure}\hfill
  \begin{subfigure}[t]{0.245\textwidth}
    \centering
    \includegraphics[width=\linewidth]{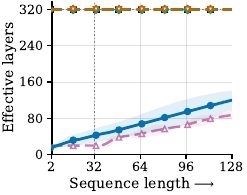}
    \caption{Adaptivity, $A_5$}
  \end{subfigure}\hfill
  \begin{subfigure}[t]{0.245\textwidth}
    \centering
    \includegraphics[width=\linewidth]{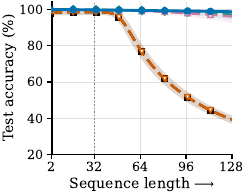}
    \caption{Accuracy, $S_5$}
  \end{subfigure}\hfill
  \begin{subfigure}[t]{0.245\textwidth}
    \centering
    \includegraphics[width=\linewidth]{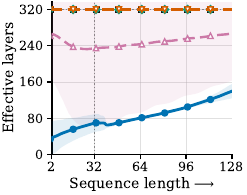}
    \caption{Adaptivity, $S_5$}
  \end{subfigure}

  \caption{\textbf{Length generalization and adaptive compute as a function of sequence length.}
  Shaded bands show 95\% confidence intervals over seeds. The vertical dotted line marks
  the training length 32. The matched compute budget is 320 effective layers.}
  \label{fig:state-tracking-per-k}
\end{figure*}

\vspace{-2mm}

\paragraph{State tracking.}
As shown in Figure~\ref{fig:state-tracking-per-k}, plain TRM fails to extrapolate: at length 128, TRM and TRM with ACT enabled both obtain $45.8\%\pm3.9\%$ on $A_5$ and $39.4\%\pm1.9\%$ on $S_5$. Adding a causal 1D convolution layer substantially improves length generalization, reaching $91.4\%\pm2.3\%$ on $A_5$ and $97.2\%\pm2.5\%$ on $S_5$. This modification is not part of the original TRM, but matches the task structure: group composition is a local left-to-right scan, $s_i=s_{i-1}\cdot g_i$, and causal convolution provides a shared translation-equivariant primitive for this operation.

However, 1D convolution does not make ACT reliable. On $A_5$, TRM+conv+ACT scales compute with sequence length but drops to $65.3\%\pm2.2\%$, well below TRM+conv without ACT. On $S_5$, it reaches $96.2\%\pm3.2\%$, but uses substantially more compute than FPRM; moreover, only a few seeds learn to adapt, while the others exhaust the full compute budget. In contrast, FPRM scales compute smoothly with sequence length while maintaining high accuracy, reaching $98.1\%\pm2.2\%$ on $A_5$ and $98.8\%\pm0.9\%$ on $S_5$ at length 128.

\paragraph{Sudoku-Extreme.} 
We compare the performance and efficiency of the halting mechanisms in FPRM and TRM on Sudoku puzzles of varying difficulty, measured by the number of empty cells (\Cref{fig:adaptivity_sudoku}). The sample counts per difficulty level are shown in~\Cref{fig:fprm_empty_cells}. We observe that halting mechanisms in both TRM and FPRM adapt the number of loops, i.e., inference compute, to the sample difficulty. In contrast, the default behavior of TRM during inference (deactivated ACT) does not adapt the compute to the task difficulty. Even when we enable ACT with TRM, FPRM proves to be more efficient and accurate.

\begin{figure}[t]
    \centering

    % --- Shared legend for all three plots ---
    \includegraphics[
    width=0.75\linewidth,
    trim=0 8pt 0 0,
    clip
]{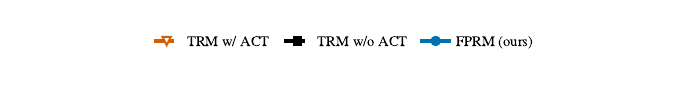}
    % -------- Left: Figure 5, with two subfigures --------
    \begin{minipage}[t]{0.66\linewidth}
        \centering

        \begin{subfigure}[t]{0.48\linewidth}
            \centering
            \includegraphics[width=\linewidth]{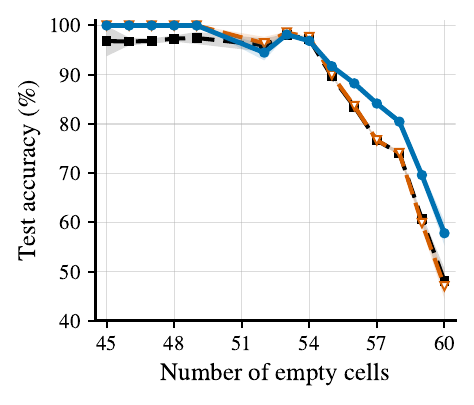}
            \caption{Accuracy vs. difficulty}
            \label{fig:accuracy_difficulty_sudoku}
        \end{subfigure}
        \hfill
        \begin{subfigure}[t]{0.48\linewidth}
            \centering
            \includegraphics[width=\linewidth]{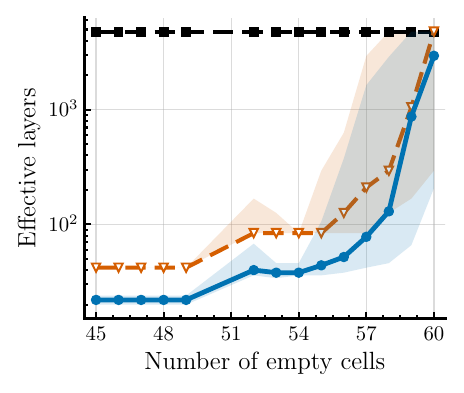}
            \caption{Inference compute vs. difficulty}
            \label{fig:compute_difficulty_sudoku}
        \end{subfigure}

        \caption{
            \textbf{FPRM achieves (a) better accuracy, while (b) adapting more efficiently to the task difficulty.}
            %\textbf{FPRM adapts to the problem difficulty on Sudoku.}
            Difficulty is measured by the number of empty cells in the Sudoku grid. The max. compute budget is matched across models (4788 effective layers). From (b): effective layers are reported as medians with 25th--75th percentiles bands. The default behavior of TRM is without ACT at inference time (in black), which exhausts the max. budget for all sample difficulties.}
        \label{fig:adaptivity_sudoku}
    \end{minipage}
    \hfill
    % -------- Right: Figure 6, separate numbered figure --------
    \begin{minipage}[t]{0.32\linewidth}
        \centering

        \includegraphics[width=\linewidth]{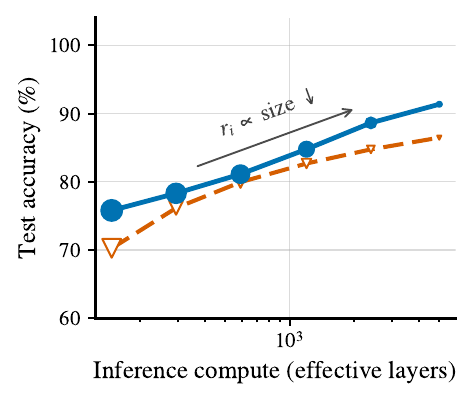}
        \vspace{0.51em}

        \caption{
            \textbf{Test-time scaling lowers residuals while improving accuracy.} Both models run until the matched compute budget is exhausted. FPRM achieves higher accuracy across the entire range. Marker size denotes residual.}
        \label{fig:sudoku_inference_scaling}
    \end{minipage}
\end{figure}

\paragraph{Compute–accuracy trade-off in fixed-point halting.}
The previous experiments show that FPRM adapts its compute to task difficulty. We now examine how this adaptation can be controlled. In~\Cref{fig:fprm_patience_decay}, we demonstrate the effect of the step-size decay rate ($\gamma$) and maximum patience ($P$) from~\Cref{alg:fp_optimizer} on the halting time and performance at test-time. Max. inference-time budget was set to be the same (70k effective layers) for all experiments. We observe that larger decay rates improve the performance. Moreover, we observe that maximum patience has minimal impact on the performance, with its impact completely disappearing for larger decay rates. On the other hand, the increased performance comes at the cost of reaching much deeper \efflayers before halting. The largest decay rate that achieves the best performance also exhausts the max. inference budget. 

% This emphasizes the role of the step-size decay rate and the maximum patience as a way to control the compute-accuracy trade-off.
These trends follow from how $\gamma$ and $P$ control the step size $\eta$ from~\Cref{alg:fp_optimizer}. A decay rate closer to 1 reduces $\eta$ only slightly once the residual stops improving. The state therefore keeps evolving, halting is deferred, and the model reaches deeper \efflayers. The accuracy gain follows directly from these additional iterations. Patience $P$ sets only when a decay occurs, not its magnitude, so its effect is minor; as $\gamma$ approaches $1$ each decay becomes negligible and the two patience curves coincide. The decay rate thus controls the compute-accuracy trade-off directly: a larger $\gamma$ spends more compute and yields higher accuracy, while $P$ offers only secondary control that vanishes as compute saturates.

\subsection{Depth-induced signal propagation issues}
\label{sec:signal-prop}
Here, we aim to investigate the role of pre-norm with residual scaling in mitigating the depth-induced signal propagation issues in looped models. For a comprehensive investigation, we approach the problem from three angles: \textbf{(1)} \textit{trainability}, where we show that pre-norm with residual scaling keeps activations bounded and enables stable training at large depth (Section~\ref{sec:boundedness}); \textbf{(2)} \textit{depth utilization}, where we show that boundedness alone is not sufficient, and that pre-norm with residual scaling additionally improves depth utilization in FPRM~\citep{DBLP:journals/corr/abs-2502-05795} (Section~\ref{sec:depth_util}); and \textbf{(3)} \textit{residual scaling}, where we analyze the training dynamics of the scaling parameters (Section~\ref{sec:alphas_analysis}).

\subsubsection{Boundedness of activation norms and trainability} \label{sec:boundedness}
As noted in~\Cref{sec: sig_prop}, the normalization scheme governs a trade-off between activation stability and signal propagation, which sharpens with depth. Post-norm keeps activations bounded but suffers from signal propagation issues. Pre-norm enables better signal propagation but lets activations grow exponentially. To isolate this trade-off, we adopt the Looped Transformer framework~\citep{saunshi2025reasoninglatentthoughtspower, dehghani2019universaltransformers, kaiser2016neuralgpuslearnalgorithms} with a fixed number of \efflayers. This controls for dynamic halting, which can reduce the \efflayers, as introduced in~\Cref{sec:method}.

In~\Cref{fig:ut_activation_norm} we show the norm of the final activations of the residual branch of Looped Transformer at initialization. From the figure, we conclude that the magnitude of the activations grows exponentially in the case of pre-norm Looped Transformer. In contrast, when using post-norm or pre-norm with residual scaling, the architecture benefits from bounded activations. In ~\Cref{fig:ut_max_len_vs_iters} we give evidence that boundedness is a prerequisite for reaching deeper \efflayers. Pre-norm architecture with its unbounded activations diverges and can't reach deep \efflayers, while bounded architectures ensure trainability. Therefore, a Looped Transformer with pre-norm and without residual scaling cannot achieve deep \efflayers, highlighting one aspect of the importance of residual scaling in pre-norm. 

\subsubsection{Pre-norm  with residual scaling enables higher depth utilization} \label{sec:depth_util}
An important measure for evaluating signal propagation issues in deep neural networks is depth utilization~\citep{DBLP:journals/corr/abs-2502-05795}. Depth utilization measures whether all layers contribute meaningfully to the model. It is commonly probed by removing layers from a trained model and measuring the effect on performance at test-time. In models with signal propagation problems, removing deeper layers (closer to the output) usually does not impact the performance significantly, highlighting that these models fail to utilize the compute due to signal propagation issues. However, in a looped model there is no fixed stack of layers to remove, since the same block is applied to the previous latent representations. Consequently, we cannot perform the same experiment here. We therefore probe the same property from the opposite direction: instead of \textit{removing computation} and measuring degradation, we \textit{add computation} and measure improvement. We do so in two complementary regimes. In the first, using the state-tracking task, we ask whether the model can be trained to use the depth that a harder task requires. In the second, using the Sudoku-Extreme task, we ask whether a trained model can convert depth beyond its training regime into further gains at test time.

\begin{figure*}[t] % use figure instead of figure* in a single-column document
    \centering

    % -------- Left: FPRM loop-utilization (shared legend + two subfigures) --------
    \begin{minipage}[t]{0.65\textwidth}
        \vspace{0pt} % <-- top-align this minipage with the one beside it
        \centering

        % Shared legend (spans both subfigures below)
        \includegraphics[width=\linewidth]{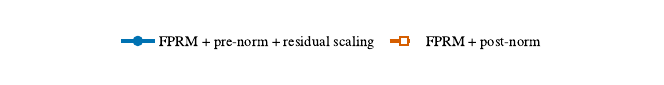}
        \vspace{-1.9em}

        \begin{subfigure}[t]{0.48\linewidth}
            \centering
            \includegraphics[width=\linewidth]{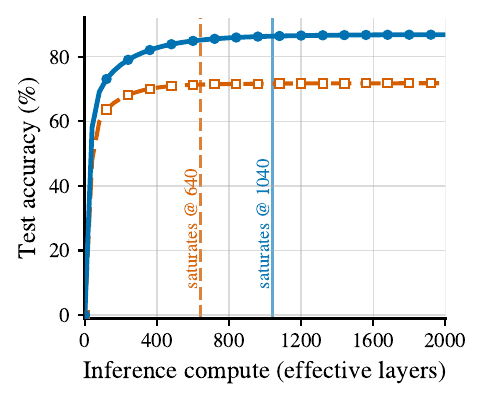}
            \caption{Test accuracy}
            \label{fig:fprm_norm_accuracy}
        \end{subfigure}
        \hfill
        \begin{subfigure}[t]{0.48\linewidth}
            \centering
            \includegraphics[width=\linewidth]{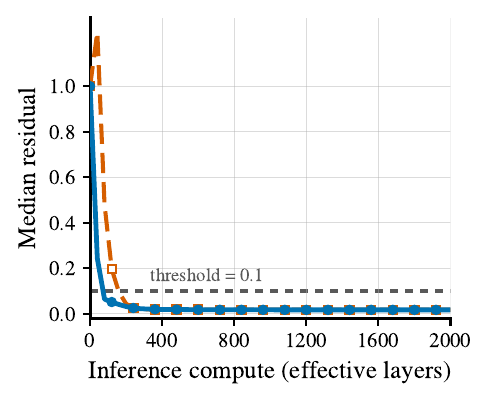}
            \caption{Fixed-point residuals}
            \label{fig:fprm_norm_residual}
        \end{subfigure}

        \caption{
            \textbf{Loop-utilization of FPRM on Sudoku.} \textbf{(a)} test accuracy of FPRM with pre-norm with residual scales vs. post-norm. \textbf{(b)} median residual. The pre-norm model is better at loop utilization, while both have similar residuals. This indicates similar latent-space convergence, with more meaningful updates in the pre-norm variant, resulting in improved performance.
        }
        \label{fig:fprm_norm}
    \end{minipage} \hfill
    % -------- Right: stacked accuracy / compute vs. decay rate (separate legend) --------
    \begin{minipage}[t]{0.31\textwidth}
        \vspace{10pt} % <-- top-align with the left minipage (legends line up)
        \centering
        % Separate shared legend on top, then the stacked plot below
        \includegraphics[width=0.6\linewidth]{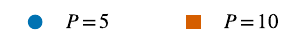}
        
        % \vspace{0.4em}
        \includegraphics[width=\linewidth]{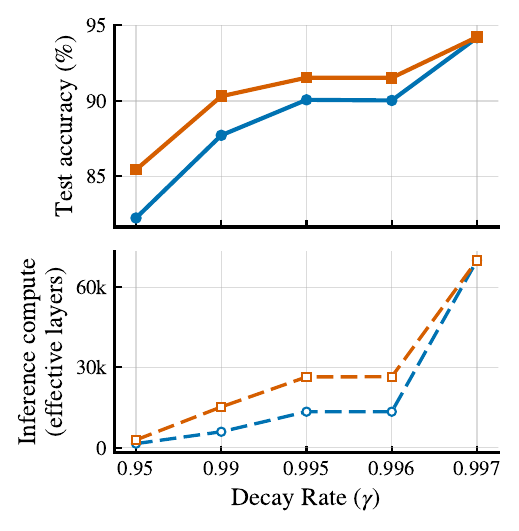}

        \vspace{-0.5em}
        
        \caption{
            \textbf{Decay rate and patience.} Test accuracy and \efflayer of FPRM with fixed-point halting as a function of decay rate $\gamma$, for maximum-patience $P\in\{5, 10\}$.
        }
        \label{fig:fprm_patience_decay}
    \end{minipage}

\end{figure*}

\paragraph{State-tracking.} In~\Cref{sec:boundedness}, we discussed the necessity of using normalization layers that ensure the boundedness of the activations, which is required for the stable training. However, bounded activations are not a sufficient condition to ensure effective utilization of large depth, once it is reached. We demonstrate this in~\Cref{fig:ut_max_len_vs_iters} using a state-tracking task, where increasing difficulty, corresponding to longer sequences, requires training at greater depths~\citep{movahedi2025fixedpointrnnsinterpolatingdiagonal}. We train a Looped Transformer model with its number of loops (\efflayer) tied to the train sequence length. We choose the sequence length from the set $\{8, 16, 32, 64, 128, 256, 512\}$. We plot the maximum sequence length solved with $>90\%$ accuracy against \efflayer. Because we also at test-time match \efflayer to the training sequence length, a model with no signal-propagation bottleneck should solve exactly the length its depth permits, tracing the identity line $y=x$ (Figure 3 in ~\citet{movahedi2025fixedpointrnnsinterpolatingdiagonal}). We observe that this behavior is only present in the model equipped with pre-norm and residual scaling. We interpret this observation as strong evidence for improved trainability in Looped Transformers with pre-norm and residual scaling.

\begin{wrapfigure}[25]{r}{0.3\textwidth}
    \vspace{-1.0em}
    \centering
    \captionof{table}{Sensitivity of FPRM to residual scaling initialization on Sudoku-Extreme dataset. Each cell reports best test sequence accuracy (\%) for a given pair of initial values.}
    \resizebox{\linewidth}{!}{%
      \begin{tabular}{lccc}
        \toprule
        & \multicolumn{3}{c}{$\alpha_2$ init} \\
        \cmidrule(lr){2-4}
        $\alpha_1$ init & 0.25 & 0.50 & 0.75 \\
        \midrule
        0.25 & 83.44 & 78.10 & 83.24 \\
        0.50 & 84.49 & 89.05 & 86.29 \\
        0.75 & \textbf{94.23} & 91.41 & 85.70 \\
        \bottomrule
      \end{tabular}}
    
    \label{tab:alpha_sweep}

    \vspace{1em}

    \includegraphics[width=\linewidth]{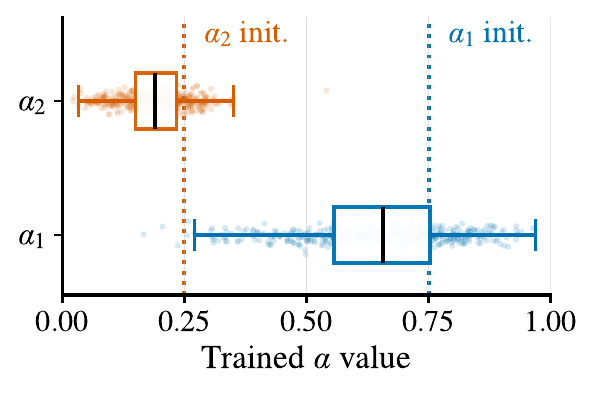}
    \captionof{figure}{The distribution of the residual scales in FPRM after training on the Sudoku-Extreme dataset.}
    \label{fig:alpha_distribution}

\end{wrapfigure}

\paragraph{Sudoku-Extreme.} Compared to the previous experiment on the state-tracking task, here we run each model far beyond its trained depth, trying to detect the point where more compute no longer translates into improvements at test-time. We expect the performance of a model with fewer signal propagation issues to saturate later and with more \efflayers, indicating that the model is capable of reaching deeper \efflayer. On the other hand, the performance of a model bottlenecked by signal propagation problems is expected to saturate early, indicating that it cannot convert the extra compute into better predictions. For this experiment, we focus on two variants of the FPRM model, one with pre-norm and residual scaling, and the other with post-norm, both trained on the Sudoku-Extreme task.

In~\Cref{fig:fprm_norm_accuracy}, we demonstrate that scaling test-time compute improves performance for both types of normalization. Furthermore, considering~\Cref{fig:fprm_norm_residual}, we also observe that the majority of the improvement comes before at least half of the samples halt. However, there are clear differences between the \efflayer of the two normalization methods in~\Cref{fig:fprm_norm_accuracy}, as the pre-norm model's performance saturates at almost twice as much compute, indicating improved signal propagation through depth. 

The advantage of pre-norm with residual scaling also extends to the cross-model comparison in~\Cref{fig:sudoku_inference_scaling}, between FPRM (pre-norm with residual scaling) and TRM (post-norm). In this inference-time scaling experiment, all samples are run to a varied maximum looping compute budget. For TRM, compute is scaled through deep supervision steps, which we find optimal relative to scaling L- and H-steps (see~\Cref{sec:trm_scaling_LH_vs_deep_sup}). FPRM outperforms TRM across a range of effective-layer depths reached, with the gap widening at higher compute budgets, consistent with FPRM making better use of its depth.

\subsubsection{The training dynamics of residual scales} \label{sec:alphas_analysis}
%\vspace{0.5em}
While our original motivation for residual scaling was to prevent unbounded activations in pre-norm, we note a parallel relationship between our solution and common solutions to signal propagation problems~\citep{DBLP:journals/corr/abs-2502-05795, DBLP:conf/nips/NociABOSL22}. Specifically, it has been known that scaling down the output of the sub-layers when introducing them to the residual stream is beneficial to increasing \efflayer, which is equivalent to increasing $\alpha_1$ in~\Cref{eq:layer_scaling}. Moreover, in~\Cref{thm:damped-fp-convergence}, we show that the looping will become more stable for smaller $\alpha_2$ in~\Cref{eq:iter_mixing}. In order to investigate the impact of these two parameters, we perform a coarse-grained ablation on the initial value of $\alpha_1,\alpha_2$ on the Sudoku-Extreme task.

\Cref{tab:alpha_sweep} demonstrates that the best initialization places $\alpha_1$ at a high value and $\alpha_2$ at a low value. Interestingly, the $\alpha_1$ preference matches the common solutions to signal propagation problems: keeping the residual stream dominant. Moreover, the ablation also highlights the importance of having a more contractive mapping at initialization, as our convergence analysis (\Cref{thm:damped-fp-convergence}) requires a sufficiently small $\alpha_2$ for the loop to reach a fixed-point. On the other hand, comparing the row with the smallest $\alpha_2$ choice ($\alpha_2=0.25$) with the column with the largest $\alpha_1$ ($\alpha_1=0.75$), we observe that increasing $\alpha_1$ has a slightly higher positive impact on the accuracy than a decreasing $\alpha_2$. We hypothesize that this is because it is easier for the model to recover from a bad choice of $\alpha_2$ than $\alpha_1$, as the gradients for $\alpha_1$ come from two different sources (the MHA and MLP sub-layers), and thus can be noisier.

In~\Cref{fig:alpha_distribution}, we provide the distribution statistics of the residual scales after training. Interestingly, we observe that while the median of the $\alpha_1,\alpha_2$ values over channels does not deviate significantly from the initial point, the spread of the distributions widens significantly. In the case of $\alpha_1$, the widening happens at a much larger scale, covering both very small and very large values. On the other hand, the $\alpha_2$ distribution remains more concentrated, with the majority of the values actually becoming smaller than the initial point. This can be interpreted as the model learning to become more contractive during training, which is in line with the observations in~\citep{DBLP:conf/nips/BansalSBEHGG22}. Furthermore, this observation also supports our hypothesis that it might be easier for the model to learn the optimal $\alpha_2$ values than the $\alpha_1$ values.

\section{Discussion}\label{sec:discussion}
Our experiments support three broader observations about FPRM and looped reasoning models in general, which we discuss in turn.
\paragraph{Looped fixed-point models are adaptive.} FPRM adapts to the difficulty of the problem more effectively compared to TRM (Figures~\ref{fig:state-tracking-per-k},~\ref{fig:adaptivity_sudoku}, and~\Cref{sec:adaptity}), using fewer \efflayers (compute), while achieving better performance. This is a consequence of FPRM halting closer to the saturation point of accuracy (\Cref{fig:fprm_scaling_convergence}). In contrast, TRM with its ACT halting mechanism either halts too early, resulting in lower performance, or too late, using excessive compute.

\paragraph{Enabling ACT at inference time.} The original proposed TRM does not use its trained ACT head at inference time, leading to the non-adaptive behavior. However, we find this to be an engineering challenge rather than a fundamental limitation. Therefore, in Figures~\ref{fig:fprm_trm_halting_points},~\ref{fig:state-tracking-per-k},~\ref{fig:adaptivity_sudoku}, we record the number of effective layers reached at the moment when the probability of halting exceeds $0.5$. Similarly, in the case of FPRM, we do so when the residual drops below the set threshold ($0.1$ in this case). However, note that the halted samples remain in the batch until the last sample in the batch halts. We leave the efficient implementation for future work.

\paragraph{The role of hierarchy in HRM and TRM.} While originally hierarchical reasoning was biologically motivated~\citep{DBLP:journals/corr/abs-2506-21734}, later explanations involved likening the lower level of the hierarchy to a scratch pad, the latent representation of which is used by the higher level for prediction~\citep{jolicoeurmartineau2025morerecursivereasoningtiny}. However, the role of hierarchy as the driving force behind the success of hierarchical reasoning models has been brought into question recently, with similar architectures without the hierarchy performing as well as hierarchical models~\citep{ge2025hierarchicalreasoningmodelsperspectives, arcprize2025hiddendrivers}. In~\Cref{sec:signal-prop}, we observe that a Transformer model with post-norm, which is the building block of TRM and HRM, suffers from a signal propagation issue. On the other hand, in~\Cref{sec:puzzle-exps} we were able to show that by improving signal propagation, FPRM improves upon these models without requiring the hierarchy. In~\Cref{fig:trm_deep_superv_vs_LH}, we observe that reallocating the compute from the H- and L-steps to the additional deep supervision steps improves TRM's performance. Since more H- and L-steps increase the \efflayer of TRM within each supervision step, the signal propagation issue induced by post-norm is amplified. The improvement is therefore consistent with TRM being limited by the same signal-propagation issue we identify in~\Cref{sec:signal-prop}. In light of these results, we hypothesize that there might be a simpler explanation for the success of hierarchical models: \textit{the hierarchy improves signal propagation}. We identify the theoretical explanation of the role of hierarchy through the lens of optimization and signal propagation as an interesting direction for future work. 

\paragraph{Scaling behavior of FPRM.} 
    The results of~\Cref{sec:exps} combine into a coherent picture of how FPRM scales its computation. First, with better signal propagation FPRM is able to utilize compute more efficiently (\Cref{fig:sudoku_inference_scaling}). Second, as more difficult problems require more compute~\citep{merrill2025illusionstatestatespacemodels, movahedi2025fixedpointrnnsinterpolatingdiagonal}, better test-time scaling of FPRM is mostly visible in harder tasks (Figures~\ref{fig:state-tracking-per-k},~\ref{fig:accuracy_difficulty_sudoku}). Finally, because in FPRM halting is governed by the fixed-point optimizer rather than a learned module, a natural controlling mechanism for the compute-performance trade-off appears in the form of the decay rate $\gamma$ and maximum patience $P$, allowing practitioners to select a desired point on the Pareto front. However, the optimality of the algorithms learned by looped models is not guaranteed, with great variation not only possible but likely. For example, for solving $A_5$, CoT would require a super-logarithmic number of iterations \citep{merrill2024expressivepowertransformerschain}, while an optimal algorithm could solve it in logarithmic time. This suboptimal scaling has also been observed in recurrent-in-depth state-space models \citep{movahedi2025fixedpointrnnsinterpolatingdiagonal}, a behavior that we also observe in~\Cref{fig:state-tracking-per-k}. Therefore, we propose it as an open challenge to find a latent reasoning architecture that achieves a solution with logarithmic complexity to state-tracking while remaining Turing-complete \citep{dehghani2019universaltransformers}.

\paragraph{Limitations.} In a similar spirit to previous literature on end-to-end reasoning  \citep{kaiser2016neuralgpuslearnalgorithms,fan2025loopedtransformerslengthgeneralization, DBLP:journals/corr/abs-2506-21734, jolicoeurmartineau2025morerecursivereasoningtiny, du2022learningiterativereasoningenergy, du2024learningiterativereasoningenergy}, we test our model only on algorithmic tasks and not on natural language. It is an open challenge to demonstrate that the compositional reasoning behavior that latent models exhibit on algorithmic tasks translates to other domains. In addition, even though the base architecture of FPRM could adopt any model (e.g. CNN, MLP, state-space models), we limit our experiments to Transformers.

\section{Conclusion}
We present architectural modifications for looped fixed-point Transformers that enable the use of pre-norm, improving the model's ability to exploit deeper \efflayer provided by looping. These modifications allow FPRM to outperform hierarchical baselines of similar size, such as HRM and TRM, on common symbolic reasoning benchmarks. We show that on state tracking and Sudoku-Extreme, FPRM is able to adapt its compute to the difficulty of the task.
This capability stems from dynamically scaling depth through fixed-point iterations and improving signal propagation. We hope these architectural modifications and the accompanying insights will support further progress on latent reasoning models.

\section*{Acknowledgments and Disclosure of Funding}
We thank Felix Sarnthein, Albert Catalan-Tatjer, Jonas Geiping, Philipp Nazari, Carl Richardson, and Nouha Dziri for the helpful discussions and comments. Alexander Theus, Vera Milovanović, and Shlomo Libo Feigin are supported by the Max Planck ETH Center for Learning Systems. Vera Milovanović and Antonio Orvieto are supported by the AI2050 program at Schmidt Sciences. Antonio Orvieto, T. Konstantin Rusch, and Sajad Movahedi acknowledge the financial support of the
Hector Foundation.

\onecolumn
% --- BIBLIOGRAPHY COMMANDS ---
\bibliographystyle{plainnat} % Sets the style of the references (e.g., plain, unsrt, alpha)
\bibliography{bibliography} % Points to your references.bib file (do not include the .bib extension)

\newpage
\onecolumn
\appendix

\section{Proofs}
\subsection{Fixed-point iterations bound}
\label{app:alphas-proof}

\begin{proof}
We simplify our notation by omitting the pre-norm layer in~\Cref{eq:layer_scaling}. We start by unrolling the computation across the $L$ layers within a single fixed-point iteration. By recursively applying the layer update, we obtain
\begin{align*}
\bmz_i^{2L}
=
&\ \alpha_1\cdot \bmz_i^{2L-1}
+
\beta_1\cdot f^{2L}_{\theta^{2L}}\left(\bmz_i^{2L-1}\right) \\
=
&\ \alpha_1^2\cdot \bmz_i^{2L-2}
+
\alpha_1\beta_1\cdot f^{2L-1}_{\theta^{2L-1}}\left(\bmz_i^{2L-2}\right)
+
\beta_1\cdot f^{2L}_{\theta^{2L}}\left(\bmz_i^{2L-1}\right) \\
=
&\ \alpha_1^3\cdot \bmz_i^{2L-3}
+
\alpha_1^2\beta_1\cdot f^{2L-2}_{\theta^{2L-2}}\left(\bmz_i^{2L-3}\right)
+
\alpha_1\beta_1\cdot f^{2L-1}_{\theta^{2L-1}}\left(\bmz_i^{2L-2}\right)
+
\beta_1\cdot f^{2L}_{\theta^{2L}}\left(\bmz_i^{2L-1}\right) \\
\vdots \\
=
&\ \alpha_1^{2L}\cdot \bmz_i^0
+
\beta_1\cdot
\left(
\sum_{j=0}^{2L-1}
\alpha_1^j\cdot
f^{2L-j}_{\theta^{2L-j}}
\left(\bmz_i^{2L-j-1}\right)
\right).
\end{align*}
Now, substituting this expression into the fixed-point update gives
\begin{align*}
\bmz_{i+1}^0
=
&\ \alpha_2\cdot \bmz_i^{2L}
+
\beta_2\cdot \bmx \\
=
&\ \alpha_2\cdot
\left(
\alpha_1^{2L}\cdot \bmz_i^0
+
\beta_1\cdot
\sum_{j=0}^{2L-1}
\alpha_1^j\cdot
f^{2L-j}_{\theta^{2L-j}}
\left(\bmz_i^{2L-j-1}\right)
\right)
+
\beta_2\cdot \bmx \\
=
&\ \alpha_2\alpha_1^{2L}\cdot \bmz_i^0
+
\alpha_2\beta_1\cdot
\left(
\sum_{j=0}^{2L-1}
\alpha_1^j\cdot
f^{2L-j}_{\theta^{2L-j}}
\left(\bmz_i^{2L-j-1}\right)
\right)
+
\beta_2\cdot \bmx.
\end{align*}
For compactness, define
$
\rho=\alpha_2\alpha_1^{2L}
$
and
$
\bms_i=
\sum_{j=0}^{2L-1}
\alpha_1^j\cdot
f^{2L-j}_{\theta^{2L-j}}
\left(\bmz_i^{2L-j-1}\right).
$
Then the fixed-point iteration can be written as
$
\bmz_{i+1}^0
=
\rho\cdot \bmz_i^0
+
\alpha_2\beta_1\cdot \bms_i
+
\beta_2\cdot \bmx.
$
Unrolling this recursion over fixed-point iterations gives
\begin{align*}
\bmz_{i+1}^0
=
&\ \rho^{i+1}\cdot \bmz_0^0
+
\beta_2
\left(
\sum_{k=0}^{i}\rho^k
\right)
\cdot \bmx
+
\alpha_2\beta_1
\left(
\sum_{k=0}^{i}
\rho^k\cdot \bms_{i-k}
\right).
\end{align*}
Since $0\leq \alpha_1,\alpha_2<1$, we have $0\leq \rho<1$. Therefore, the geometric series is convergent. Taking norms and using the boundedness of each layer map, we get
\begin{align*}
\left\Vert \bmz_{i+1}^0\right\Vert
\leq
&\ \rho^{i+1}\left\Vert \bmz_0^0\right\Vert
+
\beta_2
\left(
\sum_{k=0}^{i}\rho^k
\right)
\left\Vert \bmx\right\Vert
+
\alpha_2\beta_1
\left(
\sum_{k=0}^{i}\rho^k
\right)
\left(
\sum_{j=0}^{L-1}\alpha_1^j
\right)c_f.
\end{align*}
Letting $i\to\infty$, the first term vanishes and the two geometric sums converge, which gives
\begin{align*}
\limsup_{i\to\infty}
\left\Vert \bmz_i^0\right\Vert
\leq
&\ \frac{\beta_2}{1-\rho}
\left\Vert \bmx\right\Vert
+
\frac{\alpha_2\beta_1}{1-\rho}
\left(
\frac{1-\alpha_1^{2L}}{1-\alpha_1}
\right)c_f.
\end{align*}
Substituting back $\rho=\alpha_2\alpha_1^{2L}$, we obtain
\begin{align*}
\limsup_{i\to\infty}
\left\Vert \bmz_i^0\right\Vert
\leq
&\ \frac{\beta_2}{1-\alpha_2\alpha_1^{2L}}
\left\Vert \bmx\right\Vert
+
\frac{\alpha_2\beta_1\left(1-\alpha_1^{2L}\right)}
{\left(1-\alpha_2\alpha_1^{2L}\right)\left(1-\alpha_1\right)}
c_f.
\end{align*}
We now set
$
\beta_2=1-\alpha_2\alpha_1^{2L}.
$
This makes the coefficient of $\left\Vert \bmx\right\Vert$ equal to $1$. Furthermore, setting
$
\beta_1=
\frac{\beta_2\left(1-\alpha_1\right)}
{\left(1-\alpha_1^{2L}\right)}
$
makes the coefficient of $c_f$ equal to $\alpha_2$. Therefore,
\begin{align*}
\limsup_{i\to\infty}
\left\Vert \bmz_i^0\right\Vert
\leq
\left\Vert \bmx\right\Vert
+
\alpha_2\cdot c_f.
\end{align*}
In particular, if the fixed-point iteration converges to $\bmz_\infty^0$, then
\begin{align*}
\left\Vert \bmz_\infty^0\right\Vert
\leq
\left\Vert \bmx\right\Vert
+
\alpha_2\cdot c_f.
\end{align*}
This completes the proof.
\end{proof}

\subsection{Contractive mapping}
\label{app:damped-fp-convergence-proof}
\begin{proof}
We first show that $f_{\theta}(.;\bmx)$ is a contraction with respect to $\bmz$. For any $\bmz,\bmz'$, using the Lipschitzness of the $L$-layer model, we have
\begin{align*}
\left\Vert
f_{\theta}(\bmz;\bmx)
-
f_{\theta}(\bmz';\bmx)
\right\Vert
\leq
&\
\lambda_f
\left\Vert
\left(\alpha_2\cdot \bmz+\beta_2\cdot \bmx\right)
-
\left(\alpha_2\cdot \bmz'+\beta_2\cdot \bmx\right)
\right\Vert \\
=
&\
\alpha_2\lambda_f
\left\Vert
\bmz-\bmz'
\right\Vert.
\end{align*}
Therefore, if $0\leq \alpha_2\lambda_f<1$, the map $f_{\theta}(.;\bmx)$ is strictly contractive. By the Banach fixed-point theorem, it has a unique fixed-point $\bmz^\star$, and the iteration
$
\bmz_{i+1}=f_{\theta}(\bmz_i;\bmx)
$
converges to $\bmz^\star$.

We now prove the residual bound. Since $\bmz_{i+1}=f_{\theta}(\bmz_i;\bmx)$, the residual at iteration $i$ can be written as
\begin{align*}
\left\Vert
f_{\theta}(\bmz_i;\bmx)-\bmz_i
\right\Vert
=
\left\Vert
\bmz_{i+1}-\bmz_i
\right\Vert.
\end{align*}
Using the contraction property of $f_{\theta}(.;\bmx)$, we get
\begin{align*}
\left\Vert
\bmz_{i+1}-\bmz_i
\right\Vert
=
&\
\left\Vert
f_{\theta}(\bmz_i;\bmx)
-
f_{\theta}(\bmz_{i-1};\bmx)
\right\Vert \\
\leq
&\
\alpha_2\lambda_f
\left\Vert
\bmz_i-\bmz_{i-1}
\right\Vert.
\end{align*}
Applying this inequality recursively gives
\begin{align*}
\left\Vert
\bmz_{i+1}-\bmz_i
\right\Vert
\leq
\left(\alpha_2\lambda_f\right)^i
\left\Vert
\bmz_1-\bmz_0
\right\Vert.
\end{align*}
Since $\bmz_1=f_{\theta}(\bmz_0;\bmx)$, we obtain
\begin{align*}
\left\Vert
f_{\theta}(\bmz_i;\bmx)-\bmz_i
\right\Vert
\leq
\left(\alpha_2\lambda_f\right)^i
\left\Vert
f_{\theta}(\bmz_0;\bmx)-\bmz_0
\right\Vert.
\end{align*}
This completes the proof.
\end{proof}

% \label{app:trunc-bptt-proof}

\subsection{Mitigating oscillation through damping}
\label{app:damping-proof}
\begin{proof}
We first show that the fixed-points of $g_{\eta,\theta}(\,\cdot\,;\bmx)$ and $f_\theta(\,\cdot\,;\bmx)$ coincide. Since
\begin{align*}
    g_{\eta,\theta}(\bmz;\bmx) - \bmz \;=\; \eta\bigl(f_\theta(\bmz;\bmx) - \bmz\bigr),
\end{align*}
and $\eta > 0$, we have $g_{\eta,\theta}(\bmz;\bmx) = \bmz$ if and only if $f_\theta(\bmz;\bmx) = \bmz$.

We now study the local stability of the damped iteration around $\bmz^\star$. The Jacobian of $g_{\eta,\theta}(\,\cdot\,;\bmx)$ at $\bmz^\star$ is
\begin{align*}
    \frac{\partial g_{\eta,\theta}}{\partial \bmz}(\bmz^\star;\bmx) \;=\; (1-\eta)\matI + \eta\matJ,
\end{align*}
so for every eigenvalue $\lambda_i$ of $\matJ$, the corresponding eigenvalue of the damped Jacobian is
\begin{align*}
    \mu_i(\eta) \;=\; 1 - \eta + \eta\lambda_i \;=\; 1 + \eta(\lambda_i - 1).
\end{align*}
Local asymptotic stability is implied by $|\mu_i(\eta)| < 1$ for all $i$. Writing $\lambda_i = a_i + \mathrm{i}\,b_i$ with $a_i = \Re(\lambda_i)$,
\begin{align*}
    |\mu_i(\eta)|^2 \;=\; 1 + 2\eta(a_i - 1) + \eta^2 |\lambda_i - 1|^2.
\end{align*}
Hence $|\mu_i(\eta)| < 1$ if and only if $\eta |\lambda_i - 1|^2 < 2(1 - a_i)$, i.e.,
\begin{align*}
    0 \;<\; \eta \;<\; \frac{2(1 - \Re(\lambda_i))}{|\lambda_i - 1|^2}.
\end{align*}
By assumption $\Re(\lambda_i) < 1$ for every $i$, so each upper bound is strictly positive. Setting
\begin{align*}
    \eta_0 \;=\; \min\!\left\{\,1,\; \min_i \frac{2(1 - \Re(\lambda_i))}{|\lambda_i - 1|^2}\,\right\} \;>\; 0,
\end{align*}
we obtain $|\mu_i(\eta)| < 1$ for every $i$ and every $\eta \in (0, \eta_0)$. Therefore $\bmz^\star$ is locally asymptotically stable under the damped iteration $\bmz_{t+1} = g_{\eta,\theta}(\bmz_t;\bmx)$, and the iterates converge to $\bmz^\star$ from any sufficiently close initialization.
\end{proof}

\subsection{Error of truncated-BPTT}
\label{app:trunc-bptt-proof}

\begin{proof}
Since $\|\mathbf{J}\|_2=\sigma<1$, the Neumann series is convergent, and we have
$
\left(\mathbf{I}-\mathbf{J}\right)^{-1}
=
\sum_{j=0}^{\infty}\mathbf{J}^j.
$
Therefore, the error of the $k$-term truncated approximation is
\begin{align*}
\left\Vert
\left(\mathbf{I}-\mathbf{J}\right)^{-1}
-
\sum_{j=0}^{k-1}\mathbf{J}^j
\right\Vert_F
&=
\left\Vert
\sum_{j=k}^{\infty}\mathbf{J}^j
\right\Vert_F \\
&\leq
\sum_{j=k}^{\infty}
\left\Vert
\mathbf{J}^j
\right\Vert_F.
\end{align*}
Using the relation $\|\mathbf{A}\|_F\leq \sqrt{D}\|\mathbf{A}\|_2$ for $\mathbf{A}\in\mathbb{R}^{D\times D}$, together with submultiplicativity of the spectral norm, we get
\begin{align*}
\left\Vert
\mathbf{J}^j
\right\Vert_F
&\leq
\sqrt{D}
\left\Vert
\mathbf{J}^j
\right\Vert_2 \\
&\leq
\sqrt{D}
\left\Vert
\mathbf{J}
\right\Vert_2^j \\
&=
\sqrt{D}\cdot \sigma^j.
\end{align*}
Substituting this into the previous inequality gives
\begin{align*}
\left\Vert
\left(\mathbf{I}-\mathbf{J}\right)^{-1}
-
\sum_{j=0}^{k-1}\mathbf{J}^j
\right\Vert_F
&\leq
\sqrt{D}
\sum_{j=k}^{\infty}
\sigma^j \\
&=
\sqrt{D}\cdot
\frac{\sigma^k}{1-\sigma}.
\end{align*}
Thus, the approximation error decays as $\mathcal{O}(\sigma^k)$.

To make the corresponding gradient statement explicit, let
$
\mathbf{P}
=
\frac{\partial f_\theta}{\partial \theta}(\mathbf{z}^\star;\mathbf{x})
$
and
$
\boldsymbol{\delta}
=
\frac{\partial \mathcal{L}}{\partial \mathbf{z}^\star}.
$
The exact implicit gradient is
$
\nabla_\theta \mathcal{L}
=
\mathbf{P}^\top
\left(\mathbf{I}-\mathbf{J}\right)^{-\top}
\boldsymbol{\delta},
$
whereas the $k$-step truncated BPTT gradient is
$
\widehat{\nabla}_\theta^{(k)}\mathcal{L}
=
\mathbf{P}^\top
\left(
\sum_{j=0}^{k-1}
\left(\mathbf{J}^\top\right)^j
\right)
\boldsymbol{\delta}.
$
Therefore,
\begin{align*}
\left\Vert
\nabla_\theta \mathcal{L}
-
\widehat{\nabla}_\theta^{(k)}\mathcal{L}
\right\Vert_2
&\leq
\left\Vert \mathbf{P}\right\Vert_2
\left\Vert
\left(\mathbf{I}-\mathbf{J}\right)^{-\top}
-
\sum_{j=0}^{k-1}
\left(\mathbf{J}^\top\right)^j
\right\Vert_2
\left\Vert \boldsymbol{\delta}\right\Vert_2 \\
&\leq
\left\Vert \mathbf{P}\right\Vert_2
\left(
\sum_{j=k}^{\infty}
\left\Vert \mathbf{J}\right\Vert_2^j
\right)
\left\Vert \boldsymbol{\delta}\right\Vert_2 \\
&=
\left\Vert \mathbf{P}\right\Vert_2
\frac{\sigma^k}{1-\sigma}
\left\Vert \boldsymbol{\delta}\right\Vert_2.
\end{align*}
Hence, the truncated BPTT gradient error also decays exponentially with the number of backward passes $k$. This completes the proof.
\end{proof}

\section{A Toy Failure Mode for Recurrent Post-norm}
\label{sec:pre_post_toy}

In this section, we provide a toy example to showcase a failure mode of post-norm in a small setting. We take random $\bmx,\bmy\in\mathbb{R}^{n\times d}$ to be the input-output pair of sequences, with sequence length $n=100$ and hidden-size $d=2$, and set $\bmz^0_0=\bmx$. Let $f_\theta(\bmz;\bmx)$ be a neural network with a single sub-layer $f_{\theta^1}^1(\bmz)$ defined as:
\begin{equation*}
    f_{\theta^1}^1(\bmz)= \bmz W(\bmw),
\end{equation*}
where we define the rank-one map:
\begin{equation*}
    W(\bmw)=\bmw\mathbf{1}^{\top}\in \mathbb{R}^{2\times 2},
    \qquad
    \bmw=(w_1,w_2)^\top,
\end{equation*}
with $\theta^1=\bmw$.

The \textbf{post-norm recurrence} is defined as
\begin{equation*}
    \bmz_{i+1}^0 = \mathrm{Norm}_{\mathrm{post}}\left(\bmz_{i}^0 + f_{\theta^1}\left(\bmz_{i}^0\right) \right),
\end{equation*}
while the \textbf{pre-norm recurrence} is defined as

\begin{equation*}
    \bmz_{i+1}^0 = \left(1 - \beta\right) \cdot \bmz_{i}^0 + \beta \cdot f_{\theta^1}\left(\mathrm{Norm}_{\mathrm{post}}\left(\bmz_{i}^0\right)\right),
    \qquad
    \beta=\frac12 .
\end{equation*}

\Cref{fig:pre_post_toy} gives a minimal version of the normalization trade-off discussed in the main text based on this toy model. For both models we sweep a $200\times 200$ grid over $\bmw\in \mathcal{G}= [-5,5]^2$ and plot

\begin{equation*}
    \Delta\mathcal{L}(\bmw)
    =
    \mathcal{L}(\bmw)-\min_{\bmv\in\mathcal{G}}\mathcal{L}(\bmv),
\end{equation*}
where we define the loss as
\begin{equation*}
    \mathcal{L}(\bmw)
    =
    \frac{1}{nd}
    \left\|
        \bmz_{20}^0(\bmw)-\bmy
    \right\|_F^2,
\end{equation*}
i.e., at the $20^{th}$ \efflayer.

The figure illustrates that \textbf{boundedness does not imply trainability}.  Post-norm keeps the recurrent state bounded by construction: after every step, each row is projected back onto the unit sphere.  However, the same projection also removes radial information at every iteration.  In this two-parameter slice, the resulting loss is organized into thin angular sectors with sharp ridges and narrow low-loss regions.  Thus a random initialization of $\bmw$ is likely to start in a bounded but poorly conditioned part of the landscape, where the gradient does not point into a useful basin. This is the toy analogue of the optimization difficulty of recurrent post-norm layers.

The right panel is not bare pre-norm; it is pre-norm with residual scaling.  This matters because naive pre-norm removes the projection that controls the recurrent state and can lead to activation growth (\Cref{fig:ut_activation_norm}).  With the scaled update, each row satisfies
\begin{equation*}
\|\bmz_{i + 1}^{0}\|_2
    \le
    (1-\beta)\|\bmz_{i}^{0}\|_2+\beta.
\end{equation*}
So the toy dynamics remain bounded while preserving a live residual stream.  The broader low-loss region in \Cref{fig:pre_post_toy} is therefore consistent with the architecture we use in \Cref{thm:bounded-fp-iterates}: pre-normalization improves signal propagation, while residual scaling replaces the boundedness mechanism that post-normalization provided.
\begin{figure}
    \centering
    \includegraphics[width=0.9\linewidth]{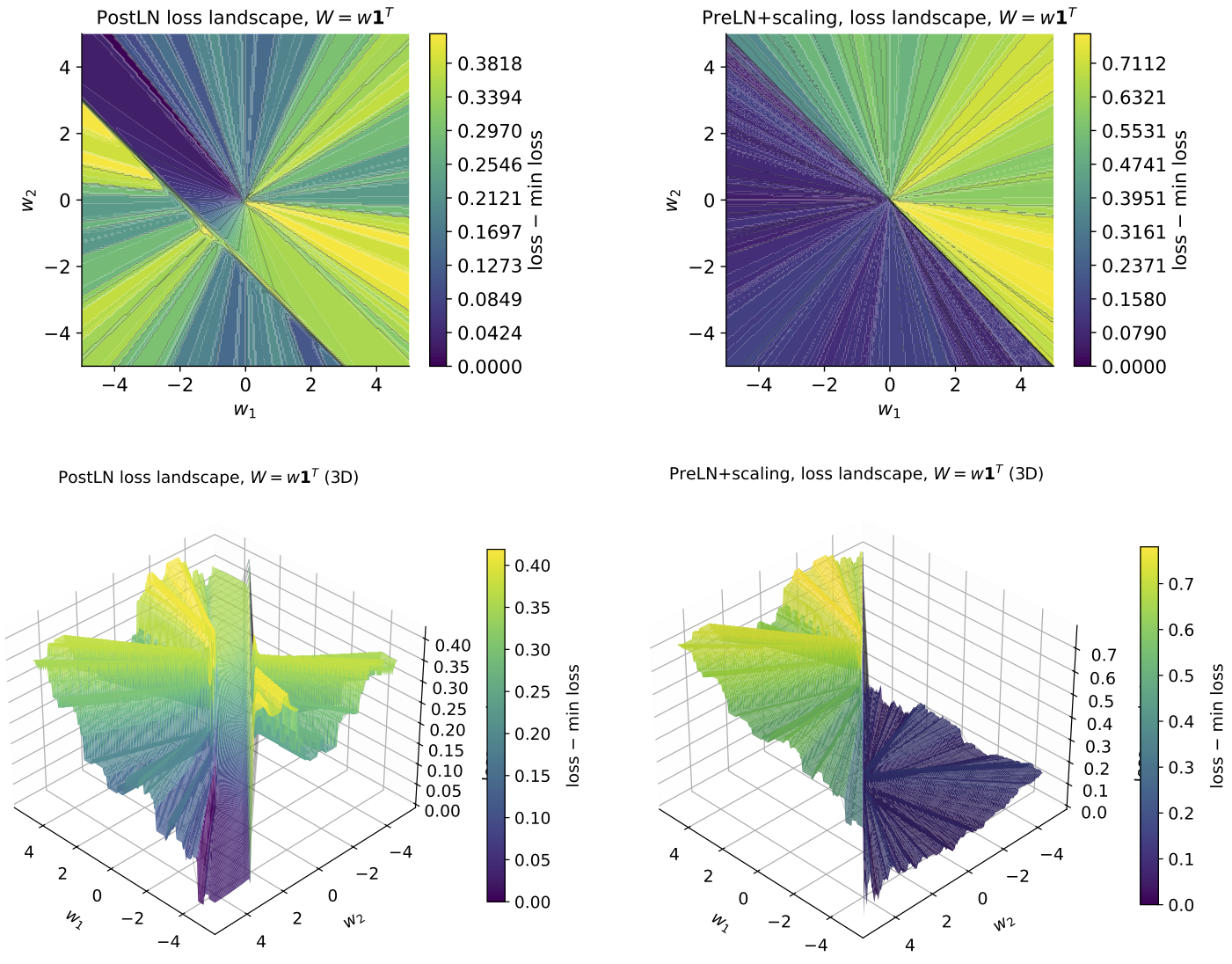}
    \caption{Landscape visualization for the setup proposed in \Cref{sec:pre_post_toy}.}
    \label{fig:pre_post_toy}
\end{figure}

\section{Further Details About the Architecture}\label{architecture_details}

\paragraph{Fixed-point solver.}  Let $\bmz_i \in \mathbb{R}^{B \times T \times d}$ denote the $i^{th}$ latent representation (with $B$ denoting the batch index, $T$ the sequence index, and $d$ the hidden size), and $\bmz_{i+1}$ the next latent representation. We index the $b^{th}$ batch dimension as $\bmz_i[b]$. Convergence is measured per sample $\bmz_i[b]$ by the relative $L_\infty$ norm of the residual,
\begin{equation*}
    \bmr_i[b] = \frac{\left\Vert \bmz_{i + 1}[b] - \bmz_i[b] \right\Vert_\infty}{\left\Vert \bmz_{i+1}[b] \right\Vert_\infty + \epsilon} \in \mathbb{R}.
\end{equation*}
 A sample is declared converged when $r_i[b] < \tau$. In practice, we set $\tau$ to $0.1$. But we observe that for a reasonably small choice of $\tau$, the model is not sensitive to this value. Two safeguards bound the loop: \textbf{(1)} a hard cap on the number of iterations, and \textbf{(2)} early termination if the adaptive step size collapses below a minimum.

\paragraph{Deep supervision.} We adopt a similar deep supervision mechanism as HRM \citep{DBLP:journals/corr/abs-2506-21734} and TRM \citep{jolicoeurmartineau2025morerecursivereasoningtiny}. Let $T_{\text{sup}}$ denote the deep supervision interval. After every $T_{\text{sup}}$ iterations, the intermediate activations of the model are decoded through the output head, the loss is computed, and truncated-BPTT is performed through the $k$ latest iterations. Then, the computation graph is detached from the previous step. For each sequence, this process is continued until the fixed-point of the input is reached. The number of backward passes per forward pass is therefore $\lceil k / T_{\text{sup}}\rceil$. In our model, we set $T_{\text{sup}}=k$, while in TRM and HRM, $T_{\text{sup}}$ is usually set to a larger number. However, we observe that in practice, FPRM performs a smaller number of forward and backward passes during training compared to TRM, lowering the training cost.

\paragraph{Depth-wise convolutions.} Depth-wise convolutions have proven effective in improving the performance of looped models, at a small time and parameter complexity~\citep{DBLP:journals/corr/abs-2602-02156, DBLP:journals/corr/abs-2512-14693}. Therefore, in FPRM we apply depth-wise convolutions on the latent representations at the beginning of each loop, which we find to be most effective. An overview of FPRM is available in~\Cref{fig:lofat-architecture}. We consider both 1D and 2D convolutions, and we find the 2D variant to be more effective at 2-dimensional tasks such as Sudoku and ARC, while the 1D variant is essential in state-tracking. However, as observed in~\Cref{tab:arch_sweep}, depth-wise convolutions seem to have a detrimental impact on the performance of TRM.

\section{Description of~\Cref{fig:fprm_trm_halting_points}}
\label{sec:fig1-descript}

In this figure, we categorize the puzzles into three groups: easy, medium, and hard. The grouping is based on difficulty, which is measured by the number of empty cells in the puzzle. The sample sizes for each difficulty level are balanced and set at around~1000 samples. For FPRM, we mark the halting decision for the entire group based on the residual of the group: if the mean residual is smaller than a pre-determined threshold (set to~0.1), then the model makes the halting decision. For TRM, the halting decision is marked when the ACT module signals halting for more than half of the samples. For the sake of exposition, we exclude the hardest puzzles from the groups, since they fail to halt at the current max. set budget of~10000 effective layers.

\section{Fixed-point Residuals and Halting}
We provide the test accuracy and the fixed-point residuals achieved by FPRM as a function of \efflayer in~\Cref{fig:fprm_scaling_convergence}. The residuals for more difficult problems decay at a much slower rate, indicating that they demand more compute. Furthermore, accuracy stops improving at roughly the same \efflayer where the residual plateaus, supporting the use of fixed-points as a halting criterion. 

\begin{figure}[t]
    \centering
    \begin{minipage}[t]{0.48\linewidth}
        \centering
        \includegraphics[width=\linewidth]{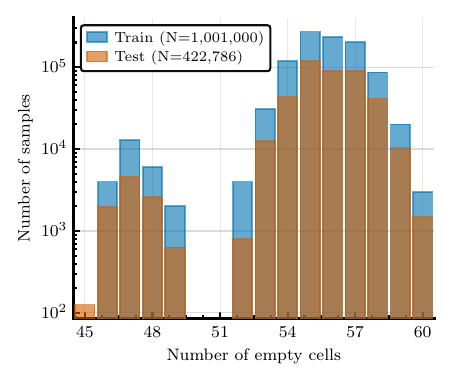}
        \caption{\textbf{Sudoku-Extreme dataset is imbalanced.} The number of samples per difficulty level (number of empty cells).}
        \label{fig:fprm_empty_cells}
    \end{minipage}
    \hfill
    \begin{minipage}[t]{0.48\linewidth}
        \centering
        \includegraphics[width=\linewidth]{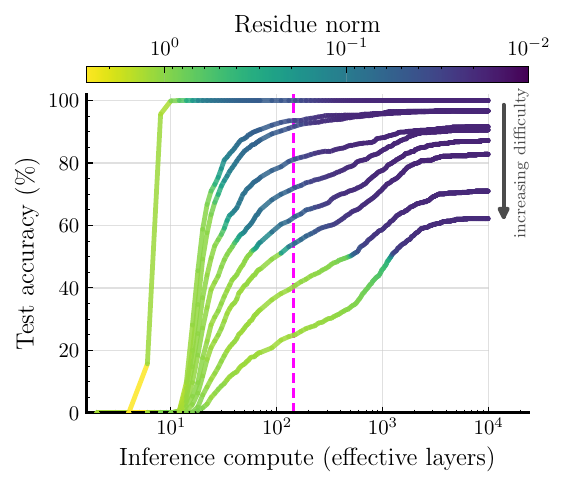}
        \caption{\textbf{FPRM allocates more compute to harder problems.} Harder inputs need more iterations before halting and peak beyond the training compute limit (dashed line); color shows residual norm.}
        \label{fig:fprm_scaling_convergence}
    \end{minipage}
\end{figure}

\section{How to effectively spend loops in TRM?} \label{sec:trm_scaling_LH_vs_deep_sup}

The default proposed TRM uses $16$ deep-supervision steps (outer loops) and variable L- and H-steps (inner loops). The L-steps outnumber the H-steps, typically by about $2\times$. However, other configurations for the number of loops spent for deep supervision vs. inner loops are possible. We test the performance of other configurations with experiments shown in~\Cref{fig:trm_deep_superv_vs_LH}. We fix the L-to-H ratio at $2$ and vary deep-supervision steps (segments) against per-segment recurrence depth (inner loops, shown with numbers next to the black markers in~\Cref{fig:trm_deep_superv_vs_LH}). We measure test accuracy as a function of the number of deep-supervision steps, with fixed inference budget at approximately $1040$ steps. This budget also matches the max. number of effective layers reached by the baseline FPRM on this task. This isolates how a fixed inference budget is best allocated: toward more outer refinement steps or deeper inner recurrence. \Cref{fig:trm_deep_superv_vs_LH} shows that the budget is best spent on outer, deep-supervision steps. This matches the finding that, in TRM's post-norm Transformer, the gains from added \efflayer depth get smaller compared to FPRM (\Cref{fig:sudoku_inference_scaling}). Fewer \efflayers per segment is therefore the better strategy at a fixed compute budget. We adopt it for all experiments where we scale TRM compute (\efflayers).

\begin{figure}
    \centering
    \includegraphics[width=0.5\linewidth]{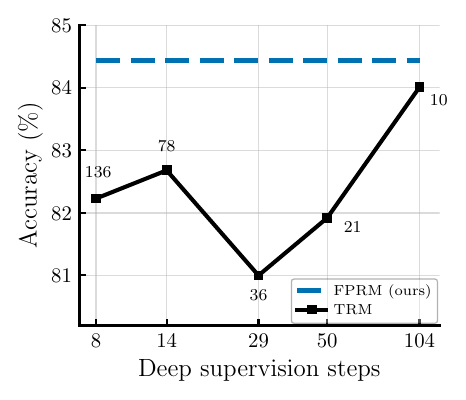}
    \caption{\textbf{The optimal way to spend the fixed looping compute is to maximize deep supervision steps.} The numbers next to markers are inner recurrence depths per each deep supervision step. The total depth of \efflayers is approximately the same across all configurations of TRM and FPRM on the Sudoku-Extreme task.}
    \label{fig:trm_deep_superv_vs_LH}
\end{figure}

\section{Additional Experimental Details}\label{experimental details}

\begin{table}[t]
    \centering
    \caption{Effect of adding FPRM's architectural modifications to TRM:
    pre-norm and residual scaling ($\alpha_2$ only, or both $\alpha_1$ and
    $\alpha_2$), individually and in combination, evaluated with and without
    the conv2d layer in the TRM core. Each column reports the change in test
    sequence accuracy on Sudoku-Extreme (\%) relative to its own post-norm,
    no-scale baseline measured in this sweep, with the absolute accuracy
    shown alongside.}
    \label{tab:arch_sweep}
    \resizebox{\linewidth}{!}{%
    \begin{tabular}{lcccc}
    \toprule
                 & \multicolumn{2}{c}{w/ conv} & \multicolumn{2}{c}{w/o conv} \\
    \cmidrule(lr){2-3} \cmidrule(lr){4-5}
    Configuration & $\Delta$ (\%) & Acc.\ (\%) & $\Delta$ (\%) & Acc.\ (\%) \\
    \midrule
    Original TRM (post-norm, no residual scaling)                              & ---       & $63.98$ & ---               & $72.60$ \\
    \;$+$ residual scaling ($\alpha_2$ only)                                    & $-6.71$   & $57.27$ & $-4.44$           & $68.16$ \\
    \;$+$ residual scaling ($\alpha_1$, $\alpha_2$)                             & $-4.47$   & $59.51$ & $-12.24$          & $60.36$ \\
    \;$-$ post-norm $+$ pre-norm $+$ residual scaling ($\alpha_2$ only)          & $-49.00$  & $14.98$ & $-58.87$          & $13.73$ \\
    \;$-$ post-norm $+$ pre-norm $+$ residual scaling ($\alpha_1$, $\alpha_2$)   & $-24.52$  & $39.46$ & $\mathbf{-52.83}$ & $19.77$ \\
    \bottomrule
    \end{tabular}%
    }
\end{table}

\paragraph{Weight initialization.} It seems that initializing the weights using a truncated normal distribution (LeCun initialization) is common practice in looped architectures. In our experiments, it accelerates the convergence but there is very little material difference in sequence accuracy after convergence.

\paragraph{Grokking.} There is some evidence for grokking in looped architectures, but on the maze task we observe convergence on the training data. And training the models for a longer period (up to 7 days) did not yield better performance. 

\paragraph{Hyperparameters, device specification}
We provide the values for some of the most important hyperparameters in the paper, per each model and dataset.

\begin{table}[h!]
\centering
\caption{Hyperparameters for Sudoku-Extreme experiments (Table~1 of the paper).
Shared across all models: 1$\times$A100-40GB, batch 768, 60\,000 epochs,
constant LR after 2\,000-step warm-up, EMA enabled (rate 0.999), puzzle-embedding length 16. All models are trained with \texttt{AdamW}~\citep{DBLP:conf/iclr/LoshchilovH19}.}
\label{tab:hp-sudoku}
\small
\begin{tabular}{lcc}
\toprule
 & TRM & FPRM \\
\midrule
\multicolumn{3}{l}{\emph{Looping structure}} \\
$H$-cycles            & 3            & --   \\
$L$-cycles            & 6            & --   \\
$H$-layers            & 0            & 0    \\
$L$-layers            & 2            & 2    \\
$n_{\text{back}}$   & $=L\text{-cycles} + 1$             & 6    \\
\midrule
\multicolumn{3}{l}{\emph{Halting}} \\
mechanism                          & ACT  & fixed-point  \\
\texttt{halt\_max\_steps}          & 16   & --           \\
\texttt{max\_iter} (train)         & --   & 12           \\
\texttt{max\_iter} (eval)          & --   & 35\,000      \\
stepsize-decay / patience (eval)   & --   & 0.997 / 10   \\
$\texttt{fp\_thresh}$              & --   & 0.1          \\
\midrule
\multicolumn{3}{l}{\emph{Block / signal-prop modifications}} \\
norm type             & post-norm & pre-norm           \\
residual scaling      & \xmark        & \cmark  \\
$\alpha_1,\alpha_2$ init & --        & 0.75,\;0.25        \\
conv branch           & --      & 2D-conv ($3\times 3$ kernel)   \\
\midrule
\multicolumn{3}{l}{\emph{Optimizer}} \\
learning rate         & $10^{-4}$            & $10^{-3}$            \\
weight decay          & 1.0                  & $10^{-3}$            \\
puzzle-emb LR         & $10^{-4}$ & $10^{-3}$  \\
puzzle-emb WD         & 1.0                  & $10^{-3}$            \\
\bottomrule
\end{tabular}
\end{table}

\begin{table}[h!]
\centering
\caption{Hyperparameters for Maze-Hard experiments (Table~1 of the paper).
Shared: trained on \texttt{maze-30x30-hard-1k} \emph{without augmentation},
4$\times$A100-80GB, constant LR after a 2\,000-step warm-up, EMA enabled (rate 0.999),
puzzle-embedding length 16.
FPRM trains for 60\,000 epochs (TRM 50\,000). FPRM is trained using \texttt{Adam-Atan2}~\citep{DBLP:conf/icml/EverettXWANLGSK24}; TRM is trained using \texttt{AdamW}.}
\label{tab:hp-maze}
\small
\begin{tabular}{lcc}
\toprule
 & TRM & FPRM \\
\midrule
\multicolumn{3}{l}{\emph{Looping structure}} \\
$H$-cycles            & 3            & -- \\
$L$-cycles            & 4            & -- \\
$H$-layers            & 0            & 0 \\
$L$-layers            & 2            & 2 \\
$n_{\text{back}}$     & $=L\text{-cycles} + 1$   & 6 \\
\midrule
\multicolumn{3}{l}{\emph{Halting}} \\
mechanism                          & ACT  & fixed-point \\
\texttt{halt\_max\_steps}          & 16   & --          \\
\texttt{max\_iter} (train)         & --   & 24          \\
\texttt{max\_iter} (eval)          & --   & 35\,000     \\
stepsize-decay / patience (eval)   & --   & 0.996 / 10  \\
\texttt{fp\_thresh}                & --   & 0.1         \\
\midrule
\multicolumn{3}{l}{\emph{Block / signal-prop modifications}} \\
norm type             & post-norm & pre-norm \\
residual scaling      & \xmark    & \cmark   \\
$\alpha_1,\alpha_2$ init & --        & 0.75,\;0.25 \\
conv branch           & --      & 1D-conv ($1\times 4$ kernel) \\
\midrule
\multicolumn{3}{l}{\emph{Optimizer}} \\
learning rate         & $10^{-4}$ & $10^{-4}$ \\
weight decay          & 1.0       & 1.0       \\
puzzle-emb LR         & $10^{-4}$ & $10^{-2}$ \\
puzzle-emb WD         & 1.0       & 1.0       \\
\bottomrule
\end{tabular}
\end{table}

\begin{table}[h!]
\centering
\caption{Hyperparameters for state-tracking experiments on $A_5$ and $S_5$
(Figure~\ref{fig:state-tracking-per-k} of the paper) and for the Looped Transformer signal-propagation
analysis (Figure~\ref{fig:ut_norm}). Shared: 1$\times$A100-80GB, global batch 1024,
\texttt{Adam-Atan2}, no LR warm-up,
EMA disabled, no puzzle embedding (\texttt{puzzle\_emb\_len}=0).
Trained at $k_{\text{train}}{=}32$, evaluated for $k\in[2,128]$.
TRM and FPRM train for 50 epochs; the Looped Transformer analysis (Fig.~2) for 30.}
\label{tab:hp-statetrack}
\small
\setlength{\tabcolsep}{4pt}   %
\begin{tabular}{lccc}
\toprule
 & TRM & FPRM & Looped Transformer (Fig.~2) \\
\midrule
\multicolumn{4}{l}{\emph{Looping structure}} \\
$H$-cycles            & 2                      & --            & -- \\
$L$-cycles            & 4                      & --            & -- \\
$H$-layers            & 0                      & 0             & 0 \\
$L$-layers            & 4                      & 2             & 2 \\
$n_{\text{back}}$     & $=L\text{-cycles} + 1$ & 4             & 4 \\
\midrule
\multicolumn{4}{l}{\emph{Halting}} \\
mechanism             & fixed iters or ACT (inference) & fixed-point   & fixed iters \\
\texttt{halt\_max\_steps}     & 16          & --            & -- \\
\texttt{max\_iter}            & --          & 128           & $=k_{\text{train}}$ \\
iter.\ distribution           & --          & deterministic & deterministic \\
\midrule
\multicolumn{4}{l}{\emph{Block / signal-prop modifications}} \\
norm type             & post-norm   & pre-norm      & sweep$^\dagger$ \\
norm placement        & --          & none          & none \\
residual scaling      & \xmark      & \cmark        & sweep$^\dagger$ \\
$\alpha_1,\alpha_2$ init & --       & 0.5,\;0.5     & sweep$^\dagger$ \\
conv branch           & --          & 1D-conv ($1\times 4$ kernel) & -- \\
\midrule
\multicolumn{4}{l}{\emph{Optimizer}} \\
learning rate         & $10^{-4}$   & $10^{-3}$     & $10^{-4}$ \\
weight decay          & $10^{-2}$   & $10^{-2}$     & $10^{-2}$ \\
\bottomrule
\end{tabular}
\\[2pt]
{\footnotesize $^\dagger$ The Looped Transformer row sweeps the
\{post-norm, pre-norm, pre-norm + residual-scaling\} variants from
\Cref{fig:ut_max_len_vs_iters}; the residual-scaling variant uses
$\alpha_1{=}0.75,\alpha_2{=}0.5$ and $k_{\text{train}}\in\{8,16,32,64\}$.}
\end{table}

\begin{table}[h!]
  \centering
  \caption{Hyperparameters for the FPRM ARC-AGI experiments.
  The two runs share an identical configuration and differ only in the training
  corpus (ARC1-Concept vs.\ ARC2-Concept, both with 1000 augmentations per sample).
  Shared across both: 4$\times$A100-80GB, batch 768, 100\,000 epochs,
  constant LR with no warm-up, EMA enabled (rate 0.999),
  puzzle-embedding length 16, hidden size 512, 8 heads, MLP expansion 4,
  RoPE position encodings. Both models are trained with
  \texttt{Adam-Atan2} ($\beta_1{=}0.9,\beta_2{=}0.95$).}
  \label{tab:hp-arc}
  \small
  \begin{tabular}{lc}
  \toprule
   & FPRM \\
  \midrule
  \multicolumn{2}{l}{\emph{Looping structure}} \\
  $H$-cycles            & --   \\
  $L$-cycles            & --   \\
  $H$-layers            & 0    \\
  $L$-layers            & 2    \\
  $n_{\text{back}}$     & 6    \\
  \midrule
  \multicolumn{2}{l}{\emph{Halting}} \\
  mechanism                          & fixed-point  \\
  \texttt{halt\_max\_steps}          & --           \\
  \texttt{max\_iter} (train)         & 8            \\
  \texttt{max\_iter} (eval)          & 1000 \\
  stepsize / decay / patience (eval) & 1.0 / 0.9 / 5 \\
  $\texttt{fp\_thresh}$              & 0.1          \\
  \midrule
  \multicolumn{2}{l}{\emph{Block / signal-prop modifications}} \\
  norm type             & pre-norm    \\
  residual scaling      & \cmark      \\
  $\alpha_1,\alpha_2$ init & 0.75,\;0.25 \\
  conv branch           & 1D-conv (kernel 4) \\
  \midrule
  \multicolumn{2}{l}{\emph{Optimizer}} \\
  learning rate         & $10^{-3}$   \\
  weight decay          & $10^{-2}$   \\
  puzzle-emb LR         & $10^{-2}$   \\
  puzzle-emb WD         & 1.0         \\
  \bottomrule
  \end{tabular}
  \end{table}

\end{document}